\newtheorem{Proposition}{Proposition}
\newtheorem{Definition}{Definition}
\definecolor{mygreen}{HTML}{3FBC9D}
\newcommand{\cmark}{{\color{mygreen}\ding{51}}}
\newcommand{\xmark}{{\color{red}\ding{55}}}
\begin{document}

\title{Closing the Oracle Gap: Increment Vector Transformation for\\ Class Incremental Learning}

\author{Zihuan Qiu, Yi Xu~\IEEEmembership{Member,~IEEE}, Fanman Meng~\IEEEmembership{Member,~IEEE}, Runtong Zhang, \\Linfeng Xu~\IEEEmembership{Member,~IEEE}, Qingbo Wu~\IEEEmembership{Member,~IEEE}, Hongliang Li,~\IEEEmembership{Senior Member,~IEEE}
\thanks{
Zihuan Qiu, Fanman Meng, Runtong Zhang, Linfeng Xu, Qingbo Wu and Hongliang Li are with the University of Electronic
Science and Technology of China, China. E-mail:
\{zihuanqiu@std., fmmeng@ rtzhang@std., lfxu@, qbwu@, hlli@\}uestc.edu.cn.
Yi Xu is with the Dalian University of Technology, China. E-mail: yxu@dlut.edu.cn.}
}

\markboth{Journal of \LaTeX\ Class Files,~Vol.~14, No.~8, August~2021}%
{Shell \MakeLowercase{\textit{et al.}}: A Sample Article Using IEEEtran.cls for IEEE Journals}


\maketitle

\begin{abstract}
Class Incremental Learning (CIL) aims to sequentially acquire knowledge of new classes without forgetting previously learned ones. Despite recent progress, current CIL methods still exhibit significant performance gaps compared to their oracle counterparts—models trained with full access to historical data. Inspired by recent insights on Linear Mode Connectivity (LMC), we revisit the geometric properties of oracle solutions in CIL and uncover a fundamental observation: these oracle solutions typically maintain low-loss linear connections to the optimum of previous tasks.
Motivated by this finding, we propose Increment Vector Transformation (IVT), a novel plug-and-play framework designed to mitigate catastrophic forgetting during training. Rather than directly following CIL updates, IVT periodically teleports the model parameters to transformed solutions that preserve linear connectivity to previous task optimum. By maintaining low-loss along these connecting paths, IVT effectively ensures stable performance on previously learned tasks. The transformation is efficiently approximated using diagonal Fisher Information Matrices, making IVT suitable for both exemplar-free and exemplar-based scenarios, and compatible with various initialization strategies.
Extensive experiments on CIFAR-100, FGVCAircraft, ImageNet-Subset, and ImageNet-Full demonstrate that IVT consistently enhances the performance of strong CIL baselines. Specifically, on CIFAR-100, IVT improves the last accuracy of the PASS baseline by +5.12\% and reduces forgetting by 2.54\%. For the CLIP-pre-trained SLCA baseline on FGVCAircraft, IVT yields gains of +14.93\% in average accuracy and +21.95\% in last accuracy. The code will be released.
\end{abstract}

\begin{IEEEkeywords}
Class Incremental Learning, Continual Learning, Catastrophic Forgetting, Linear Mode Connectivity.
\end{IEEEkeywords}

\section{Introduction}
\IEEEPARstart{C}{lass} Incremental Learning (CIL) poses a significant challenge in machine learning, requiring models to learn sequentially without access to previous training data. A notorious phenomenon in this paradigm is catastrophic forgetting \cite{mccloskey1989catastrophic}, where models overwrite previously acquired knowledge when adapting to new tasks.
To mitigate this, various approaches have been proposed. \textit{Regularization methods} \cite{Kirkpatrick2016OvercomingCF,zenke2017continual} constrain updates to crucial parameters for past tasks or 
transfer knowledge from previous tasks through intermediate features and outputs \cite{Kirkpatrick2016OvercomingCF,Hou2019LearningAU,Douillard2020PODNetPO}. \textit{Memory replay methods} \cite{Rebuffi2016iCaRLIC,liu2020mnemonics,luo2023class} retain a subset of exemplars from previous tasks for rehearsal, selecting representative samples to optimize memory efficiency.
\textit{Dynamic architecture methods} \cite{liu2021adaptive,zhou2022model} introduce new network components to accommodate new tasks. However, despite these advancements, incremental models still fall short compared to oracles that \textit{trained incrementally with access to all historical data}.

\begin{figure}[t]
\centering
  \includegraphics[width=.95\linewidth]{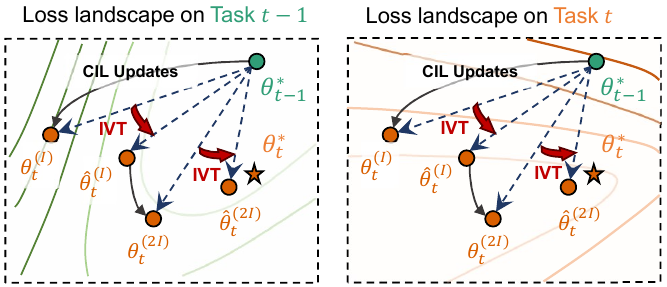}
\caption{
\textbf{Illustration of Increment Vector Transformation (IVT).} 
In class incremental learning, standard updates often drift away from the low-loss region of previous tasks, leading to forgetting. IVT mitigates this by periodically teleporting the model parameters $\theta_t$ toward regions of low loss for both past and current tasks (shallower contours). This is done every $I$ steps by transforming (red arrows) the increment vector (blue dashed line)—the parameter displacement between the previous optimum $\theta_{t-1}^*$ and the current point $\theta_t^{(m)}$. The transformed update brings $\theta_t$ closer to the oracle solution $\theta_t^*$, enabling better retention of prior knowledge while learning new classes.
}
  \label{fig:intro}
\end{figure} 

Recent studies on mode connectivity in neural networks have shed new light on the persistent performance gap in CIL \cite{draxler2018essentially, garipov2018loss, frankle2020linear}. Mode connectivity refers to the existence of low-loss paths connecting different minima in the loss landscape. While prior work by Wen \textit{et al.} \cite{wen2023optimizing} reveals that many advanced CIL methods lack such favorable linear mode connectivity (LMC), suggesting that their incremental solutions are not linearly connected without suffering high loss. However, this property remains poorly understood in oracle settings.
Although Mirzadeh \textit{et al.} \cite{Mirzadeh2020LinearMC} demonstrated that LMC can exist under a Naive-SGD oracle (\textit{i.e.}, full replay with vanilla training), their analysis is limited to basic setups and does not extend to more realistic, constraint-aware CIL formulations.

In this paper, we pose a key open question: Does LMC hold for the oracle of advanced CIL methods? We define a CIL oracle as a version of a given CIL algorithm that has access to full past data, but retains its structural components (\textit{e.g}., distillation). If such CIL oracles preserve LMC, it opens a promising direction: even strong CIL algorithms could, in principle, be guided along low-loss paths—offering a new avenue for mitigating forgetting without violating their algorithmic constraints.

Motivated by this question, we investigate whether LMC holds for CIL oracles, and empirically confirm its presence across multiple configurations. Our results show that traversing these low-loss paths allows models to retain strong performance on earlier tasks, while simultaneously integrating new knowledge. In fact, CIL oracles appear to reside on well-connected low-loss manifolds, even under CIL algorithmic constraints such as distillation or regularization.

These insights motivate us to explicitly guide optimization trajectories toward low-loss manifolds during training. To this end, we propose \textbf{I}ncrement \textbf{V}ector \textbf{T}ransformation (IVT), a plug-and-play framework that analyzes the deviation between incremental solutions and oracle behavior through the lens of their increment vectors. Specifically, we define the increment vector $V_t$ as the directional update from the previous task optimum $\theta_{t-1}^*$ to the current solution $\theta_t$, and construct a transformation that maps $V_t$ to a low-loss counterpart $\hat{V}_t$. This transformation leverages curvature information—efficiently approximated via the diagonal Fisher Information Matrix—to preserve task-relevant directions and promote low-loss connectivity across tasks. As illustrated in Figure~\ref{fig:intro}, IVT enables the model to reach a solution closer to the oracle $\theta_t^*$, striking a more favorable trade-off between stability and plasticity.
Unlike prior works \cite{Mirzadeh2020LinearMC,wen2023optimizing,marczak2024magmax,marouf2024weightedensemblemodelsstrong} that passively identify favorable models along post-hoc mode connectivity paths or depend on exemplar memory, IVT operates \textit{during training}, actively shaping the optimization trajectory \textit{without} relying on replay data. This makes IVT broadly applicable to both exemplar-free and exemplar-based settings, and compatible with diverse training paradigms—ranging from scratch training to large-scale pre-training (\textit{e.g.}, CLIP). 
Extensive experiments on CIFAR-100, FGVCAircraft, ImageNet-Subset, and ImageNet-Full confirm that IVT significantly improves performance when integrated into representative CIL methods.

Our contributions are summarized as follows:
\begin{enumerate} 
\item We present the analysis of linear mode connectivity (LMC) in \emph{CIL oracles}. While prior work has examined LMC in standard CIL or naive SGD settings, we are the first to investigate whether advanced CIL formulations possess LMC when given oracle access. Our empirical results confirm that CIL oracles exhibit favorable LMC, supporting stable knowledge integration through low-loss paths.
\item We provide a theoretical analysis of the mismatch in incremental models, revealing a misalignment between naive update directions and oracle solutions. Based on this insight, we propose Increment Vector Transformation (IVT), a novel method that dynamically transforms the optimization toward low-loss regions using curvature information.
\item IVT leverages the diagonal Fisher Information Matrix to approximate the Hessian transformation efficiently, enabling practical application in large-scale networks. It is compatible with both exemplar-based and exemplar-free CIL settings and supports models trained from scratch as well as from large-scale pre-training.
\item We integrate IVT into multiple representative CIL baselines and evaluate on CIFAR-100, FGVCAircraft, ImageNet-Subset, and ImageNet-Full. Experiments consistently demonstrate substantial improvements in accuracy and forgetting metrics across different training paradigms, validating the generality and effectiveness of our approach.
\end{enumerate}

\section{Related work}

\paragraph{Class Incremental Learning}
Existing CIL methods can be broadly categorized into three main approaches. 
\textit{Regularization methods} mitigate catastrophic forgetting by imposing constraints on model parameters or outputs. Approaches like EWC \cite{Kirkpatrick2016OvercomingCF} calculate the importance of parameters for previous tasks and penalize changes to crucial parameters, while knowledge distillation techniques such as LUCIR \cite{Hou2019LearningAU}, PODNet \cite{Douillard2020PODNetPO}, and GeoDL \cite{simon2021learning} use output logits or intermediate features to preserve learned representations. To address class imbalance, methods like BiC \cite{wu2019large} and FOSTER \cite{wang2022foster} apply post-hoc corrections and classifier adjustments to reduce bias toward newly introduced classes. GR \cite{he2024gradient} reweights gradients to balance long-tailed optimization.

\textit{Memory replay methods} store a subset of exemplars and replay them during new task learning. For instance, iCaRL \cite{Rebuffi2016iCaRLIC} selects samples that best approximate class means. Memory management strategy  \cite{liu2020mnemonics,liu2021rmm,luo2023class,liu2023online} optimize exemplar selection or compression to maximize memory efficiency. 
When storing real data is infeasible due to privacy or memory constraints, prompt-based methods \cite{wang2022learning,wang2022dualprompt,wang2022s,smith2023coda}, prototype-based approaches \cite{zhu2021prototype,zhu2022self,magistri2024elastic,li2024fcs}, synthetic techniques \cite{Smith2021AlwaysBD, choi2021dual,gao2022r,qiu2024dual}, and analytic approaches \cite{zhuang2022acil,zhuang2023gkeal,zhuang2024ds}
avoiding replay to satisfy these constraints. 

\textit{Dynamic architecture methods} adapt the network structure to accommodate new tasks by expanding or modifying network components. Expansion-based approaches \cite{liu2021adaptive,yan2021dynamically,zhou2022model,rypesc2024divide,zhou2024expandable} dynamically allocate resources, effectively isolating new knowledge from previously acquired information. Ensemble-based method \cite{lee2017overcoming,sun2024incremental,marouf2024weightedensemblemodelsstrong, marczak2024magmax} adaptively balance stability and plasticity, allowing the model to learn new information flexibly while preserving existing knowledge.

\paragraph{Mode Connectivity}
Mode connectivity is a phenomenon where different minima in the loss landscape of deep neural networks are connected by low-loss paths in the parameter space \cite{draxler2018essentially,garipov2018loss}. It offers a novel perspective on optimization, suggesting that optimum obtained through gradient-based methods are points on a connected, low-loss manifold. 
Various methods, such as polygonal chains, Bézier curves, elastic bands, and simplicial complexes, have been used to model these low-loss paths \cite{draxler2018essentially,garipov2018loss,benton2021loss}. The initialization of minima plays a crucial role: high-loss ridge often exists along the linear path between minima trained from different initializations, but linear connectivity can be achieved when minima share the same initialization and are stable to SGD noise \cite{frankle2020linear,neyshabur2020being}.
Recently, mode connectivity has found applications in CIL. By leveraging the geometric structure of the loss landscape, recent methods merge models along low-loss paths, enabling task composition without revisiting prior data \cite{jin2022dataless, liu2023tangent, marczak2024magmax, marouf2024weightedensemblemodelsstrong, tang2025merging,qiu2025mingle}. This model merging paradigm aligns closely with mode connectivity, as both rely on the existence of shared low-loss regions that allow knowledge to be integrated with minimal interference. This connection opens promising directions for scalable, data-free continual learning through geometric reasoning in parameter space.

\textbf{Discussion.} 
Recent studies have explored the connection between LMC and CIL, offering insights that motivate our work. Mirzadeh \textit{et al.}~\cite{Mirzadeh2020LinearMC} showed that Naive-SGD can exhibit LMC under full replay, but their analysis excludes regularization or distillation strategies used in modern CIL. Wen \textit{et al.}~\cite{wen2023optimizing} extended this line of inquiry, revealing that while advanced CIL methods achieve stronger performance, they often fail to preserve LMC. In contrast, we show that although individual incremental solutions may not lie on a connected manifold, their oracle counterparts do exhibit LMC. This observation motivates our method, which dynamically steers the optimization trajectory toward the low-loss linear regions that characterize such oracle solutions.

Methodologically, our approach differs in several key aspects. EOPC~\cite{wen2023optimizing} identifies low-forgetting checkpoints via interpolation using exemplars, whereas IVT directly adjusts the update direction during training, without requiring replay memory. While EOPC is limited to from-scratch training, IVT supports both randomly initialized and pre-trained models, \textit{e.g.}, CLIP~\cite{radford2021learning}. CoFiMA~\cite{marouf2024weightedensemblemodelsstrong} and MagMax~\cite{marczak2024magmax} also exploit LMC, but in a post-hoc fashion without shaping learning trajectories. Moreover, in our experiments, both methods show significant performance drops when trained from scratch, highlighting their strong dependence on pre-training. In contrast, IVT consistently improves performance across diverse CIL regimes, emphasizing the benefits of online trajectory shaping and broader robustness to training conditions.

\section{Revisiting Linear Mode Connectivity in CIL}
\label{sec2}
The forgetting analysis based on Taylor expansion is commonly used in CIL \cite{yin2020optimization,mirzadeh2020understanding,wu2024meta}. For simplicity, suppose that there are two tasks, $\mathcal{T}_1$ and $\mathcal{T}_2$. Let $\theta_1$ be the minima obtained on $\mathcal{T}_1$, we perform a second-order Taylor expansion of $\mathcal{L}_1(\theta)$ at $\theta_1$:
\begin{align}
    \mathcal{L}_{1}(\theta) \approx& \mathcal{L}_{1}\left( \theta_1 \right)+\left(\theta-\theta_1\right)^{\top} \nabla \mathcal{L}_{1}\left(\theta_1\right) \nonumber \\ 
    &+\frac{1}{2}\left(\theta-\theta_1\right)^{\top} H_1 \left(\theta-\theta_1\right)\\
    \approx& \mathcal{L}_{1}\left( \theta_1 \right) + \frac{1}{2}\left(\theta-\theta_1\right)^{\top} H_1 \left(\theta-\theta_1\right).\label{eq2}
\end{align}
The last equality holds because, at the minima $\theta_1$ of $\mathcal{T}_1$, the model is assumed to converge and thus  $ \nabla \mathcal{L}_{1}\left(\theta_1\right) \approx 0$. Besides, the Hessian matrix $H_1 = \nabla^{2} \mathcal{L}_{1}\left(\theta_1\right)$ needs to be positive semi-definite at the converged minima. Therefore, the forgetting $\mathcal{F}_1$ can be bounded as follows:
\begin{align}
      \mathcal{F}_1 &= \mathcal{L}_{1}(\theta) - \mathcal{L}_{1}\left( \theta_1 \right) \\
      & \approx \frac{1}{2}\left(\theta-\theta_1\right)^{\top}  H_1 \left(\theta-\theta_1\right) \leq \frac{1}{2} \lambda^{1} \|\Delta \theta\|^2,\label{eq3}  
\end{align}
where $\Delta \theta = \theta-\theta_1$ and $\lambda_{1}$ is the maximum eigenvalue of $H_1$.  When $\Delta \theta$ aligns with the eigenvector corresponding to $\lambda^{1}$, $\mathcal{F}_1$ reaches its upper bound, and the model update follows the direction of maximum curvature of $H_1$. 
Conversely, reducing $\mathcal{F}_1$ can be achieved by minimizing $\Delta \theta$ or by steering the model update direction away from the higher curvature directions of $H_1$.

Recently, some studies have linked catastrophic forgetting in CIL to mode connectivity \cite{mirzadeh2020understanding,verwimp2021rehearsal,wen2023optimizing}. Mirzadeh \textit{et al.} \cite{Mirzadeh2020LinearMC} empirically demonstrate that Naive-SGD oracle obtained through joint training with all previous data lies within the same low-loss region as the solutions for previous tasks and can be connected to them via low-loss linear paths. 
Moving along this path does not significantly impact the performance for previous tasks, suggesting that the Naive-SGD oracle has identified low-curvature directions in the loss landscape for earlier tasks. In contrast, this property does not hold for the incremental solution. Moving along the linear path from the previous solution to the incremental solution often results in a substantial drop in accuracy for previous tasks \cite{mirzadeh2020understanding,wen2023optimizing}.

Beyond the Naive-SGD, we explore the linear mode connectivity (LMC) for the CIL approaches $\theta_{t}$ and its oracle $\theta_{t}^*$, with a particular focus on accuracy consistency and the stability-plasticity trade-off along the linear path. To achieve this, we evaluate the accuracy of a series of interpolation models, starting from the old model $\theta_{i}^*$ (for $i \leq t-1$) and progressing along the linear update direction. Formally, the interpolation models are defined as follows:
\begin{equation}
    \bar{\theta}_{t,i} \left( \lambda \right) = \theta_{i}^* +\lambda U_{t},
\end{equation}
where $U_{t}=\left( \theta_{t} -\theta_{i}^* \right) /\| \theta_{t} -\theta_{i}^* \|_{2}$ is the normalized update vector, and $\lambda$ is the interpolation factor. The normalization of $U_t$ ensures that $\lambda$ directly corresponds to the distance between the parameters, facilitating the comparison of models across different settings.
Similarly, we define the interpolation to the CIL oracle as: 
$\bar{\theta}^*_{t,i} \left( \lambda \right) = \theta_{i}^* +\lambda U^*_{t}$, 
where $U^*_{t}=\left( \theta_{t}^* -\theta_{i}^* \right) /\| \theta_{t}^* -\theta_{i}^* \|_{2}$. 
Notably, setting $\hat{\lambda} = \| \theta_{t} -\theta_{i}^* \|_{2}$ recovers $\theta_{t}=\bar{\theta}_{t,i}(\hat{\lambda})$, while $\hat{\lambda}^*  = \| \theta_{t}^* -\theta_{i}^* \|_{2}$ yields $\theta_{t}^*=\bar{\theta}_{t,i}^*(\hat{\lambda}^*)$.
For parameters that differ between the two interpolated models, \textit{e.g.}, classifier parameters for new classes, we initialize them for $\theta_{i}^*$ before interpolation.

\begin{figure*}[t]
\centering
\subfloat{\includegraphics[width=.245\linewidth]{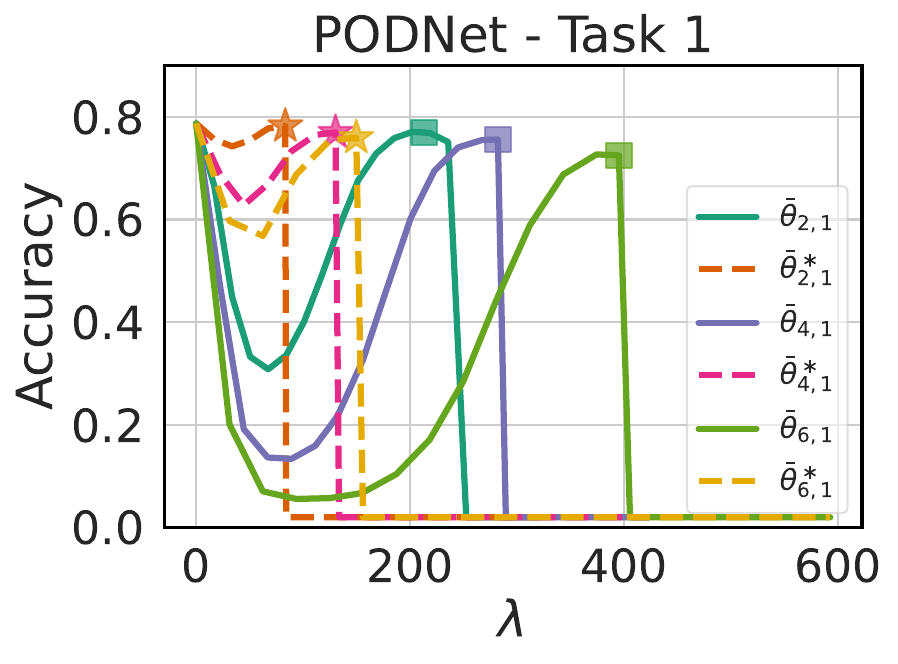}} \hfill
\subfloat{\includegraphics[width=.24\linewidth]{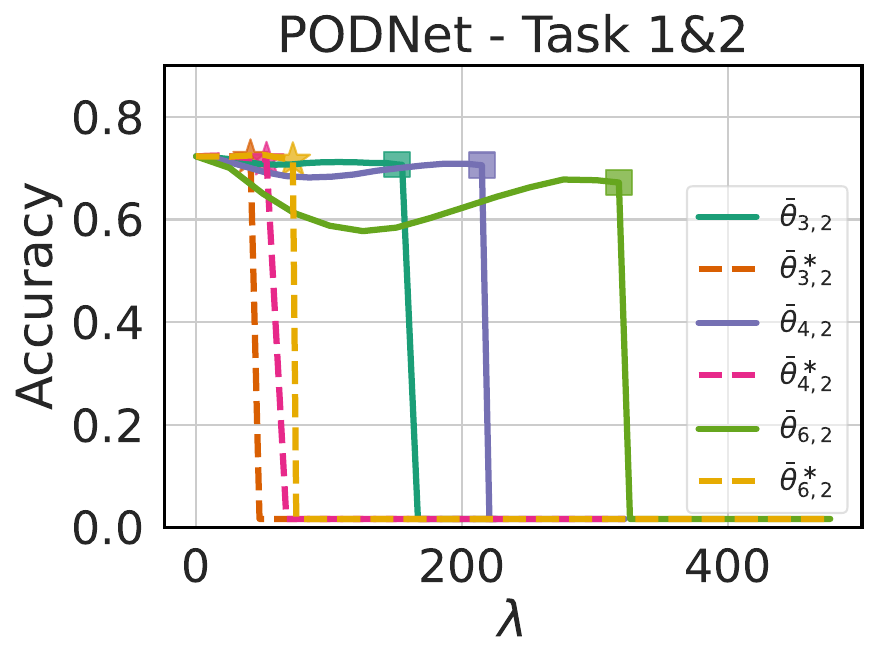}} \hfill
\subfloat{\includegraphics[width=.24\linewidth]{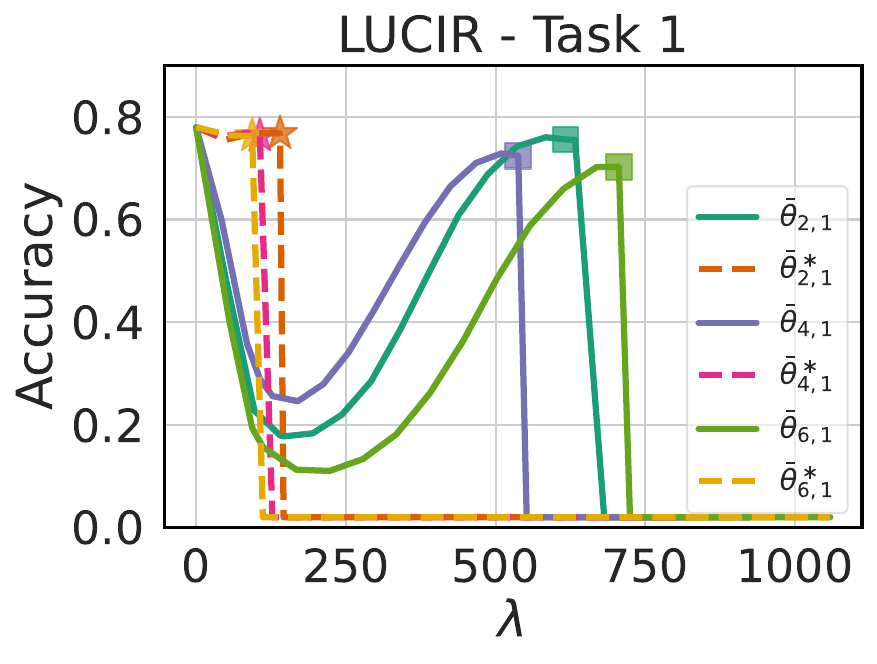}} \hfill
\subfloat{\includegraphics[width=.24\linewidth]{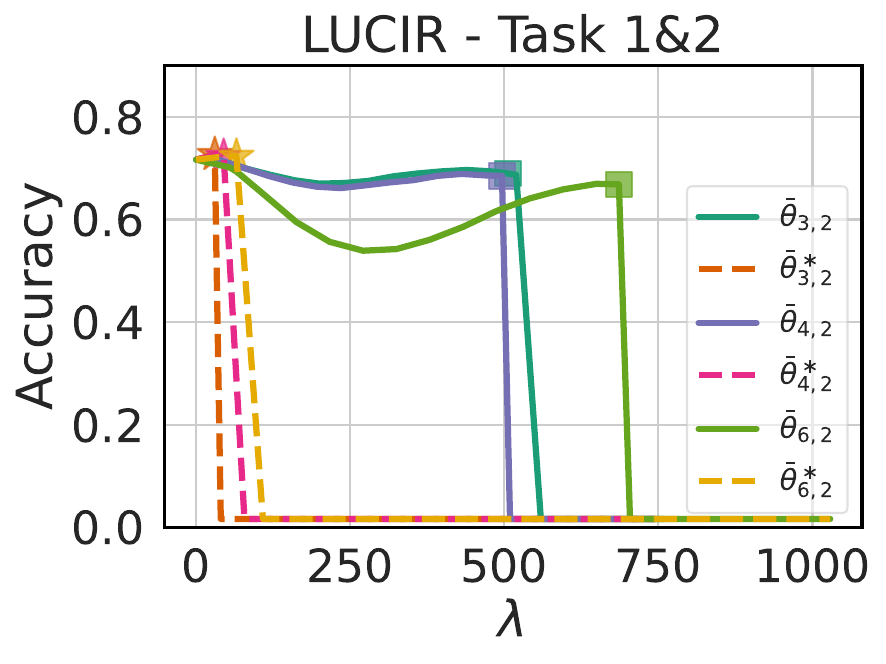}} 
\caption{Evaluating accuracy consistency along the linear path on CIFAR-100 for increments of 5 tasks (\textit{i.e.}, 6 tasks in total). The star and square denote the CIL oracle $\theta_{t}^*=\bar{\theta}_{t,i}^*(\hat{\lambda}^*)$ and the incremental model $\theta_{t}=\bar{\theta}_{t,i}(\hat{\lambda})$. (See Figure  \ref{lmc_2} for comparison when IVT is applied.)} 
\label{lmc_1}
\end{figure*}

\begin{figure*}[t]
\centering
\hspace{10pt}\subfloat{\includegraphics[width=.3\linewidth]{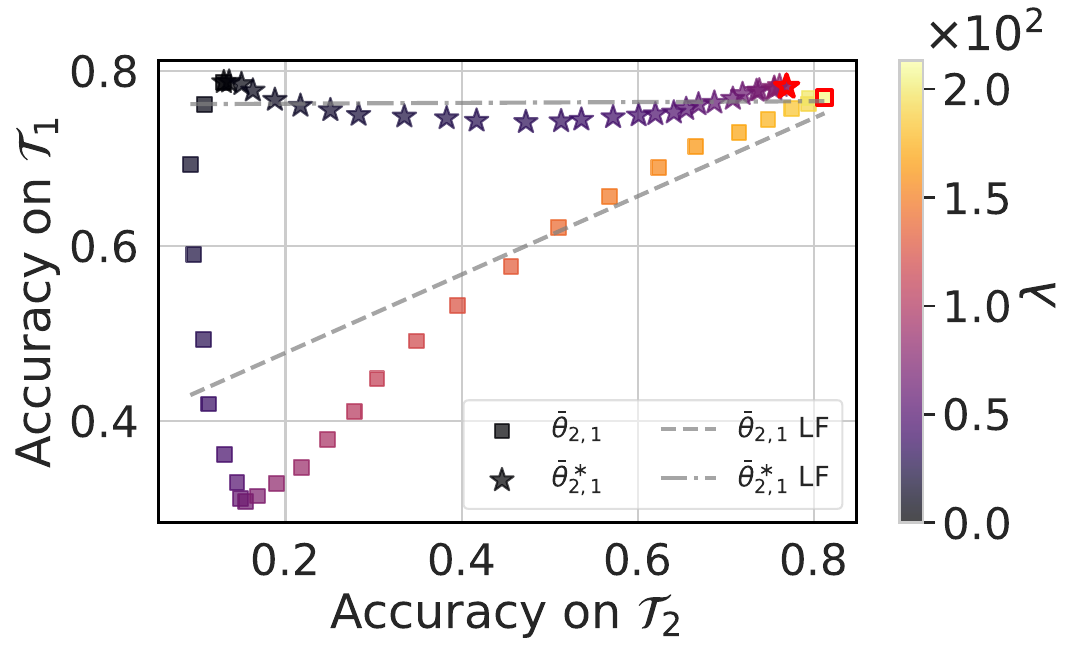}}\hfill
\subfloat{\includegraphics[width=.3\linewidth]{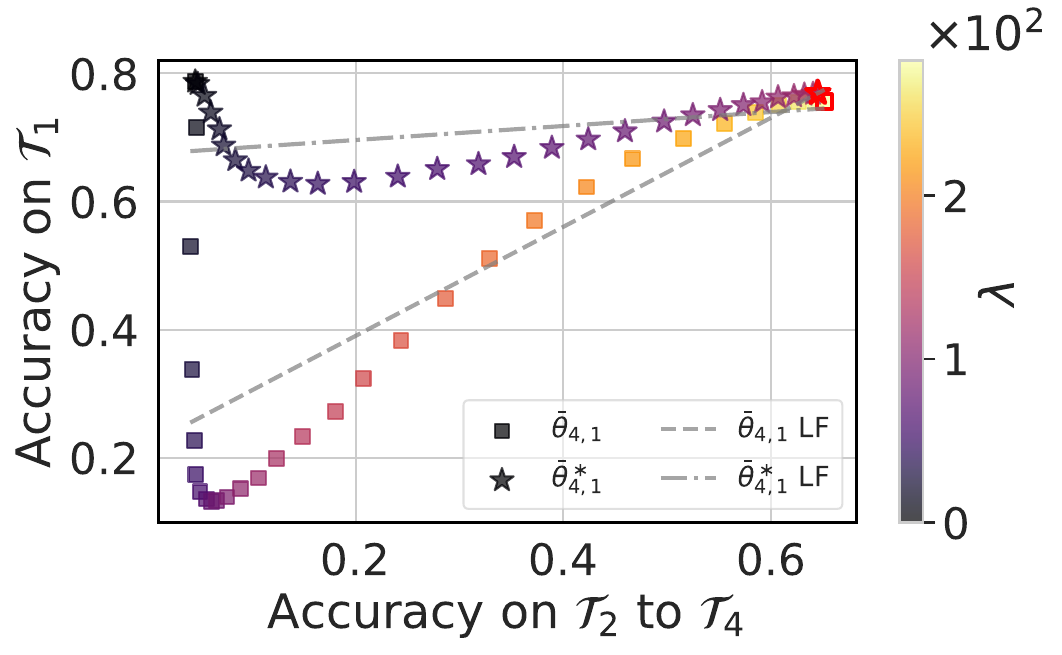}}\hfill
\subfloat{\includegraphics[width=.3\linewidth]{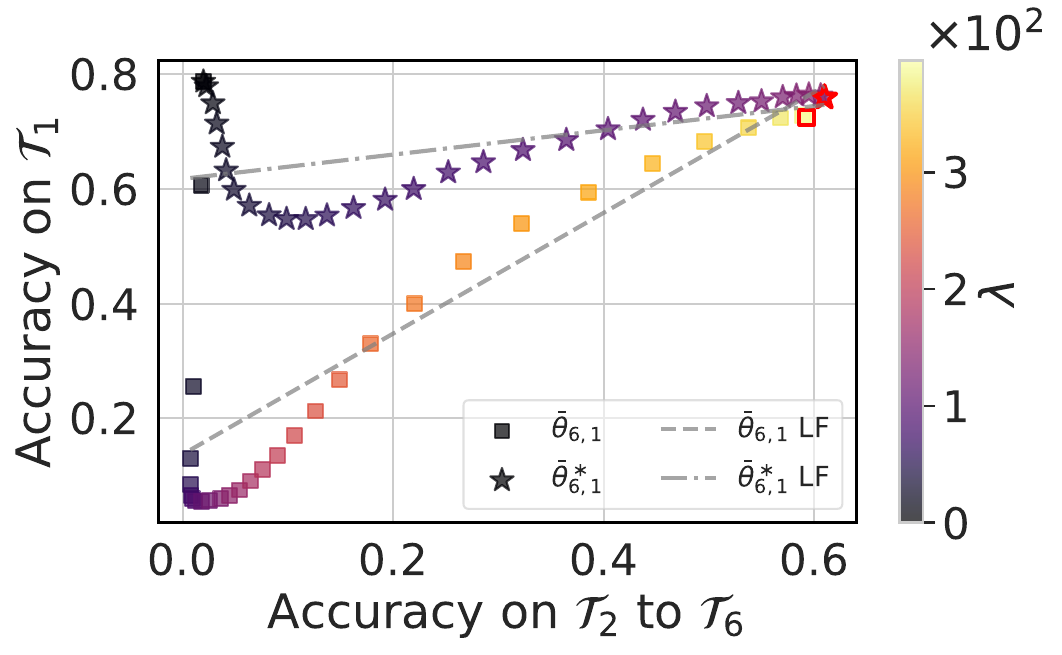}}\hspace{10pt}
\caption{Evaluating stability-plasticity trade-off along the linear path achieved by PODNet on CIFAR-100 for increments of 5 tasks. LF represents the linear fit to the scattered points. The red-edged star and square denote the CIL oracle $\bar{\theta}^*(\hat{\lambda}^*)$ and the incremental model $\bar{\theta}(\hat{\lambda})$, respectively. (See Figure  \ref{blc_2} for comparison when IVT is applied.)} 
\label{blc_1}
\end{figure*}

\subsection{Accuracy Consistency along the Linear Path}
We evaluate accuracy consistency along the linear path on CIFAR-100 using PODNet \cite{Douillard2020PODNetPO} and LUCIR \cite{Hou2019LearningAU}. The experiments consist of an initial task with 50 classes, followed by 5 incremental tasks, each introducing 10 new classes. The incremental model retains 20 exemplars per class, while the CIL oracle has access to the full training data of previous tasks at each incremental step.
Figure  \ref{lmc_1} illustrates the test accuracy of $\mathcal{T}_1$ along the linear path from $\theta_1^*$ to the models of subsequent tasks, as well as the test accuracy of both $\mathcal{T}_1$ and $\mathcal{T}_2$ as we move from $\theta_2^*$ to the models of later tasks.

In Figure  \ref{lmc_1}, we can observe that the CIL oracle achieves better accuracy consistency along the linear path. Concretely, the experiments uncover two key observations:
(1) The CIL oracles tend to remain closer to the minima of previous tasks, suggesting that fully replaying old training data anchors the models near their prior states, leading to a smaller $\Delta \theta$.
(2) The updates of the CIL oracle aligns with the direction of lower curvature. As $\lambda$ increases from 0, the oracle solutions $\bar{\theta}_{t,i}^*$ maintain stable accuracy along the linear path, suggesting they remain within the same low-loss basin as the previous minima, effectively mitigating catastrophic forgetting. In contrast, the accuracy of incremental models $\bar{\theta}_{t,i}$ declines sharply, eventually escaping its previous loss basin and settling into a suboptimal one. This transition is evident from the lower accuracy of $\theta_t = \bar{\theta}_{t,i}(\hat{\lambda})$ compared to $\theta_{t-1}^* = \bar{\theta}_{t,i}(0)$, along with a significant accuracy drop in the middle of the interpolation.

\subsection{Stability-plasticity Trade-off along the Linear Path}
To further investigate the stability-plasticity trade-off of the interpolation models along the linear path, we plot their accuracy on both new and old classes. As depicted in Figure  \ref{blc_1}, we interpolate $\theta_1^*$ with the models of subsequent tasks. 
The figure reveals that as $\lambda$ increases, $\bar{\theta}^*$ and $\bar{\theta}$ exhibit different behaviors. For $\bar{\theta}$, as $\lambda$ increases from 0 to the midpoint, the accuracy on new classes improves while the accuracy on $\mathcal{T}_1$ drops significantly, highlighting a strong stability-plasticity trade-off. As $\lambda$ continues to increase, $\bar{\theta}$ gradually mitigates this trade-off. 
In contrast, $\bar{\theta}^*$ demonstrates a more balanced trade-off, maintaining high performance on $\mathcal{T}_1$ while improving accuracy on new classes. This indicates that $\bar{\theta}^*$ effectively integrates new information without significantly compromising previous knowledge. Such behavior suggests that $\bar{\theta}^*$ resides in a more favorable region of the loss landscape, marked by lower curvature and smoother transitions between tasks, allowing it to achieve better overall performance across both old and new classes as $\lambda$ increases.

\section{Approaching Oracle by Increment Vector Transformation}
The analysis in Section  \ref{sec2} demonstrates the existence of LMC in the CIL oracle. The linear paths discovered by the oracle connect its minima with those of previous tasks while maintaining low error, providing a promising strategy to tackle catastrophic forgetting in CIL. In this section, we aim to approach the oracle by finding these low-loss linear paths.
\begin{Definition}
The increment vector\footnote{Similar to the concept of task vector \cite{ilharco2023editing}, but with the starting point at $\theta_{t-1}$ instead of $\theta_{pre}$.} is defined as the displacement between the model parameters of the previous task and those of the current task.
\end{Definition}
Assuming we start from the same old model\footnote{We can also start from $\theta_{t-1}$, which does not affect the derivation.} $\theta_{t-1}^*$, we can express the oracle $\theta_{t}^*$ and the incremental model $\theta_{t}$ into the sum of $\theta_{t-1}^*$ and their respective increment vectors $V_t^*$ and $V_t$,
\begin{equation}
   \theta_{t}^* = \theta_{t-1}^* + V_t^*, \hspace{10pt} \theta_{t} = \theta_{t-1}^* + V_t,
\end{equation}
where $V_t^* = \theta_{t}^*-\theta_{t-1}^*$ and $V_t = \theta_{t}-\theta_{t-1}^*$. 
Since $V_t^*$ is derived from the joint training with all previous data, obtaining it under the CIL scenario is challenging. However, there exist a transformation $S_t$ such that:
\begin{equation}
   V_t^*=S_t V_t, \hspace{10pt} \theta_{t}^* = \theta_{t-1}^* + S_t V_t.
\end{equation}
In other words, we aim to solve for $S_t$ to transform $V_t$ into $V_t^*$, ensuring that the incremental model resides in the low-loss region for previous tasks and remains close to $\theta_{t-1}^*$, as analyzed in Section  \ref{sec2}.

In what follows, we first theoretically study the mismatch of the incremental model and derive the form of $S_t$. We then introduce a practical method that exploits this spirit with almost no additional training cost.

\subsection{Analyzing the Mismatch of Incremental Model}
Assuming the learning process starts from $\mathcal{T}_{t-1}$, the optimization objective for incremental model $\theta_t$ is to minimize the loss function of $\mathcal{T}_{t}$ while incorporating a regularization term that approximates the implicit proxy loss in various CIL methods \cite{wu2024meta}, with parameter initialized by $\theta_{t-1}^*$,
\begin{equation}
\theta_t = \arg \min_\theta \ \mathcal{L}_t(\theta) + \frac{1}{2} \| \theta - \theta_{t-1}^* \|_{\bar{H}_{t-1}}^2, \label{eq8}
\end{equation}
where 
$\bar{H}_{t-1}= \sum_{i=1}^{t-1}H_i=\sum_{i=1}^{t-1}\nabla^2\mathcal{L}_i(\theta^*_i)$ is the cumulative Hessian for previous tasks, and is independent of $\theta_t$. $\|\Delta\theta\|_{\bar{H}_{t-1}}^2 = \Delta\theta^{\top} \bar{H}_{t-1} \Delta\theta$ measures how  different $\theta$ is from $\theta_{t-1}^*$. 
The objective for $\theta_t^*$ is similar, sharing the same initialization $\theta_{t-1}^*$, but it additionally considers the loss functions of all historical tasks $\mathcal{T}_{i}$ ( for $i \leq t-1$):
\begin{equation}
\theta^*_{t} = \arg \min_\theta \ \sum_{i=1}^{t} \mathcal{L}_{i} (\theta ) + \frac{1}{2} \| \theta - \theta_{t-1}^* \|_{\bar{H}_{t-1}}^2. \label{eq9}
\end{equation}
Notably, in the Equation \ref{eq8} and \ref{eq9}, the second term simulates the anti-forgetting mechanism of CIL methods.  This formulation differs significantly from \cite{Mirzadeh2020LinearMC}, where only Naive-SGD is considered.
Based on these optimization objectives, we can quantify the error between $\theta_t$ and $\theta_t^*$ and derive the form of transformation $S_t$ as presented in Proposition \ref{prop1}. 

\begin{Proposition}
Consider the incremental model $\theta_t$ and oracle $\theta_t^*$, both initialized from the old model $\theta_{t-1}^*$, with optimization objectives defined in Eqs. \ref{eq8} and \ref{eq9}. If $\theta_i$ and $\theta_i^*$ are searched within the neighborhood set $\bigcup_{i=1}^{t-1} \mathcal{N}_i$, where $\mathcal{N}_i = \{\theta : d(\theta, \hat{\theta}_i) < \delta_i\}$, then $\theta_t^*$ can be approximately expressed as the sum of $\theta_{t-1}^*$ and an increment vector $(\theta_t - \theta^*_{t-1})$ transformed by the term $(\bar{H}_{t-1}+\bar{H}_{t})^{-1}\bar{H}_{t}$, which is shown below:
\label{prop1}
\begin{equation}
    \theta^*_{t} \approx \theta^*_{t-1} + (\bar{H}_{t-1}+\bar{H}_{t})^{-1}\bar{H}_{t}(\theta_t - \theta^*_{t-1}). \label{eq10}
\end{equation} 
\end{Proposition}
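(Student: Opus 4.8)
The plan is to work directly from the first-order stationarity conditions of the two optimization problems in Eqs.~\ref{eq8} and~\ref{eq9}, and then to linearize each gradient term by a Taylor expansion about the shared initialization $\theta_{t-1}^*$. Writing $V_t = \theta_t - \theta_{t-1}^*$ and $V_t^* = \theta_t^* - \theta_{t-1}^*$, the claim reduces to showing $V_t^* \approx (\bar{H}_{t-1}+\bar{H}_t)^{-1}\bar{H}_t\,V_t$, after which adding $\theta_{t-1}^*$ recovers Eq.~\ref{eq10}. The neighborhood hypothesis $\theta_i,\theta_i^*\in\bigcup_{i}\mathcal{N}_i$ is precisely what licenses the two approximations I will use throughout: (i) within each $\mathcal{N}_i$ the curvature is essentially constant, so $\nabla^2\mathcal{L}_i(\cdot)\approx H_i$; and (ii) each historical minimum lies close to $\theta_{t-1}^*$, so the historical gradients nearly vanish there, $\nabla\mathcal{L}_i(\theta_{t-1}^*)\approx 0$ for $i\le t-1$, in the same spirit as the convergence argument used for Eq.~\ref{eq2}.

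First I would differentiate Eq.~\ref{eq8} and set the gradient to zero, giving $\nabla\mathcal{L}_t(\theta_t) + \bar{H}_{t-1}(\theta_t-\theta_{t-1}^*)=0$. Expanding $\nabla\mathcal{L}_t$ to first order about $\theta_{t-1}^*$, namely $\nabla\mathcal{L}_t(\theta)\approx g_t + H_t(\theta-\theta_{t-1}^*)$ with $g_t=\nabla\mathcal{L}_t(\theta_{t-1}^*)$, and using $\bar{H}_{t-1}+H_t=\bar{H}_t$, I obtain $g_t + \bar{H}_t V_t = 0$, i.e.\ $g_t \approx -\bar{H}_t V_t$. This step extracts the current-task gradient information and packages it into the observed increment $V_t$, which is the only quantity accessible under the CIL constraint.

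Next I would apply the same treatment to Eq.~\ref{eq9}, whose stationarity condition is $\sum_{i=1}^{t}\nabla\mathcal{L}_i(\theta_t^*) + \bar{H}_{t-1}(\theta_t^*-\theta_{t-1}^*)=0$. Splitting the sum into the $i\le t-1$ historical terms and the $i=t$ term, linearizing each historical gradient about $\theta_{t-1}^*$ and discarding $\nabla\mathcal{L}_i(\theta_{t-1}^*)\approx 0$ collapses $\sum_{i\le t-1}\nabla\mathcal{L}_i(\theta_t^*)\approx \bar{H}_{t-1}V_t^*$, while the current-task term gives $g_t + H_t V_t^*$. Collecting everything yields $(2\bar{H}_{t-1}+H_t)V_t^* + g_t = 0$, and since $2\bar{H}_{t-1}+H_t = \bar{H}_{t-1}+\bar{H}_t$, this reads $(\bar{H}_{t-1}+\bar{H}_t)V_t^* = -g_t$. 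Substituting $g_t\approx -\bar{H}_t V_t$ from the previous step and inverting the positive-definite (hence invertible) matrix $\bar{H}_{t-1}+\bar{H}_t$ produces $V_t^* \approx (\bar{H}_{t-1}+\bar{H}_t)^{-1}\bar{H}_t V_t$, which is exactly the transformation claimed.

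I expect the delicate part to be controlling these two approximations rather than the algebra, which is essentially linear once the expansions are in place. The cleanest potential pitfall is the vanishing of the historical gradients: strictly speaking $\theta_{t-1}^*$ minimizes the regularized \emph{sum} over past tasks, so the individual $\nabla\mathcal{L}_i(\theta_{t-1}^*)$ need not be zero, and the argument must instead invoke the neighborhood assumption (all past minima $\hat{\theta}_i$ within radius $\delta_i$) to bound each $\|\nabla\mathcal{L}_i(\theta_{t-1}^*)\|$ by a term of order $\delta_i$ and fold the residual into the $\approx$. A secondary point is that the $H_i$ are defined at the per-task minima yet are reused at $\theta_{t-1}^*$ and $\theta_t^*$; I would justify this reuse by the same constant-curvature-on-$\mathcal{N}_i$ hypothesis, noting that each Taylor remainder is $O(\delta_i^2)$ and therefore negligible within the stated approximate equality.
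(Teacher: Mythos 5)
Your derivation is correct and reaches Eq.~\ref{eq10}, but it takes a genuinely different route from the paper's proof. The paper never anchors its expansions at $\theta_{t-1}^*$: it subtracts the two stationarity conditions and works with the \emph{difference} $\theta_t^* - \theta_t$, linearizing the current-task gradient only between the two solutions, $\nabla\mathcal{L}_t(\theta_t^*)-\nabla\mathcal{L}_t(\theta_t) \approx H_t(\theta_t^*-\theta_t)$, expanding each historical gradient about its own per-task minimum $\theta_i^*$ (where $\nabla\mathcal{L}_i(\theta_i^*)\approx 0$), and then invoking the approximation of Husz\'ar~\cite{huszar2018note}, $\sum_{i=1}^{t-1} H_i(\theta_t^* - \theta_i^*) \approx \bar{H}_{t-1}(\theta_t^* - \theta_{t-1}^*)$, to collapse the sum before rearranging. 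You instead expand \emph{everything} about the shared anchor $\theta_{t-1}^*$ and solve both stationarity systems in closed form under one quadratic model, obtaining $V_t \approx -\bar{H}_t^{-1} g_t$ and $(\bar{H}_{t-1}+\bar{H}_t)V_t^* \approx -g_t$, then eliminating $g_t$. This is arguably more transparent: it exhibits both the incremental and oracle increments as linear responses to the same driving gradient $g_t = \nabla\mathcal{L}_t(\theta_{t-1}^*)$, so the transformation matrix emerges by elimination rather than rearrangement, and your step $\sum_{i\le t-1}\nabla\mathcal{L}_i(\theta_{t-1}^*)\approx 0$ is exactly the paper's Husz\'ar collapse evaluated at $\theta = \theta_{t-1}^*$ --- note that your proof only ever uses the \emph{sum} of historical gradients, so the caveat you raise yourself (that individual $\nabla\mathcal{L}_i(\theta_{t-1}^*)$ need not vanish) is already handled by this aggregate form, which is precisely what the neighborhood hypothesis licenses. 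The trade-off is that your route assumes the quadratic model of $\mathcal{L}_t$ is valid across the entire increment from $\theta_{t-1}^*$ to both $\theta_t$ and $\theta_t^*$, whereas the paper keeps $\nabla\mathcal{L}_t(\theta_t)$ exact and linearizes only over the (presumably smaller) displacement $\theta_t^*-\theta_t$; under the stated neighborhood assumption both are admissible, and your stronger locality requirement is in fact consonant with the paper's own justification for applying IVT periodically rather than post-hoc. One minor point shared by both arguments: the paper only guarantees $\bar{H}_{t-1}+\bar{H}_t$ is positive semi-definite at converged minima, so your appeal to positive definiteness for invertibility should formally be weakened to a pseudo-inverse or a strict-convexity assumption, just as the paper's own multiplication by $\bar{H}_t^{-1}$ implicitly requires.
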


\begin{proof}
We begin by stating the stationarity conditions for both the incremental model $\theta_t$ and the oracle $\theta_t^*$, which are derived from setting the derivatives of the objectives in Eqs. 8 and 9 to zero:
\begin{align}
\bar{H}_{t-1} (\theta_t - \theta^*_{t-1}) &= - \nabla \mathcal{L}_t(\theta_t), \label{eq13}\\
\bar{H}_{t-1} (\theta^*_{t} - \theta^*_{t-1}) &= - \sum_{i=1}^{t}\nabla \mathcal{L}_{i}(\theta^*_{t}), \label{eq14}
\end{align}
Next, we subtract Equation \ref{eq14} from Equation \ref{eq13}, yielding:
\begin{equation}
   \bar{H}_{t-1} (\theta^*_{t} - \theta_t) = -  \sum_{i=1}^{t-1}\nabla \mathcal{L}_{i}(\theta^*_{t}) - \left[ \nabla \mathcal{L}_{t}(\theta^*_{t}) - \nabla \mathcal{L}_t(\theta_t) \right]. \label{eq15}
\end{equation}
To proceed, we apply a first-order Taylor approximation to approximate the difference between the gradients: \begin{equation}
\nabla \mathcal{L}_{t}(\theta^*_{t}) - \nabla \mathcal{L}_t(\theta_t) = H_t (\theta^*_{t} - \theta_t). \label{eq16}
\end{equation}
Substituting Equation \ref{eq16} into Equation \ref{eq15}, we obtain: 
\begin{equation}
   \bar{H}_{t-1} (\theta^*_{t} - \theta_t) = -  \sum_{i=1}^{t-1}\nabla \mathcal{L}_{i}(\theta^*_{t}) -  H_t (\theta^*_{t} - \theta_t). \label{eq17}
\end{equation}
We then move the term $H_t (\theta^*_{t} - \theta_t)$ to the left-hand side and multiply the entire expression by $\bar{H}_{t}^{-1}$: 
\begin{equation}
   \theta^*_{t} - \theta_t = -\bar{H}_{t}^{-1}  \sum_{i=1}^{t-1}\nabla \mathcal{L}_{i}(\theta^*_{t}) \label{eq18}
\end{equation}
Now, by approximating $\nabla \mathcal{L}_{i}$ for each $i$ as in Equation \ref{eq16}, we express: 
\begin{equation}
\theta^*_{t} - \theta_t = -\bar{H}_{t}^{-1} \sum_{i=1}^{t-1} \left[\nabla \mathcal{L}_{i} (\theta^*_{i}) +  H_i (\theta^*_{t} - \theta^*_i) \right]
\end{equation}
Since the gradient $\nabla \mathcal{L}_{i}$ is close to zero for the converged old model $\theta^*_{i}, i\leq t-1$, it can be neglected in practice, leading to: 
\begin{equation}
\theta^*_{t} - \theta_t \approx -\bar{H}_{t}^{-1} \sum_{i=1}^{t-1} \ H_i (\theta^*_{t} - \theta^*_i) \label{eq20}
\end{equation}
Assuming the parameters are searched within the neighborhood set $\bigcup_{i=1}^{t-1} \mathcal{N}_i$, where $\mathcal{N}_i = \{\theta : d(\theta, \hat{\theta}_i) < \delta_i\}$, we follow the approximation from  \cite{huszar2018note}:
\begin{equation}
    \sum_{i=1}^{j-1} H_i (\theta - \theta_i) \approx ( \sum_{i=1}^{j-1} H_i ) (\theta - \theta_{j-1}) \label{eq21}
\end{equation}
Substituting Equation \ref{eq21} into Equation \ref{eq20} and rearranging with respect to $\theta_t^*$, we recover Proposition 1. 
\begin{equation}
\theta^*_{t} \approx \theta^*_{t-1} + (\bar{H}_{t-1}+\bar{H}_{t})^{-1}\bar{H}_{t}(\theta_t - \theta^*_{t-1})
\end{equation}

\end{proof}

From the results in Equation \ref{eq10}, we have the following observations: 
(1) When $\theta_t$ resides within a relatively flat loss landscape for the old tasks, characterized by a small $\bar{H}_{t-1}$, the approximation indicates that $\theta_t^*$ closely aligns with $\theta_t$. This suggests that the incorporation of new tasks does not significantly disrupt the knowledge acquired from previous tasks.
(2) When $\theta_t$ lies in a region of low curvature for the new task, that is, when $H_{t}$ is small and $\bar{H}_{t}$ is approximately equal to $\bar{H}_{t-1}$, then $\theta_t^*$ can be approximated as the arithmetic mean of $\theta_t$ and $\theta^*_{t-1}$.

\subsection{IVT: Increment Vector Transformation for CIL}
In neural networks with numerous parameters, explicitly computing the full Hessian matrix is often impractical. The Fisher Information Matrix (FIM) \cite{fisher1922mathematical,amari1996neural} is an efficient alternative for Hessian estimation, as it can be directly derived from first-order derivatives,
\begin{equation}
    F_{\theta} = \mathbb{E}_{(x,y)\sim p(x, y)}  \left[ \nabla \log p(y|x) \, \nabla \log p(y|x)^{\top} \right].
\end{equation}
The FIM equals the expected Hessian of the negative log-likelihood \cite{martens2020new}, $F_{\theta}=\mathbb{E}_{x}\left[H_{\theta}\right]$. However, storing FIM still requires $|\theta| \times |\theta|$ memory.
As is common in existing approaches \cite{Kirkpatrick2016OvercomingCF,matena2022merging,daheim2023model}, we can reduce the computation cost by considering only its diagonal elements, bringing it to a level comparable to training on $N$ samples. The diagonal of the FIM is computed as follows,
\begin{equation}
    F_t = \mathbb{E}_{(x, y) \in \mathcal{T}_t} \left(\nabla \mathcal{L}_{t}(x, y) \right)^2.
    \label{eq11}
\end{equation}
In our implementation, we compute the diagonal of FIM in an online manner by accumulating the back-propagated gradients at each batch during training, leading to negligible additional computation cost.

Building on Proposition \ref{prop1}, we propose a novel method for CIL named Increment Vector Transformation (IVT). 
By replacing the Hessian in Equation \ref{eq10} with Equation \ref{eq11}, we formally define the iterative update rule of IVT as follows,
\begin{equation}
    \hat{\theta}_{t} := \hat{\theta}_{t-1} + (\bar{F}_{t-1}+\bar{F}_{t})^{-1}\bar{F}_{t}(\theta_t - \hat{\theta}_{t-1}). \label{eq12}
\end{equation}
Here, $\bar{F}_{t}= \sum_{i=1}^{t}F_i$ represents the cumulative diagonal of the FIM up to task $t$. $\hat{\theta}_{t}$ denotes the IVT model.

The transformation in Equation~\ref{eq12} is applied periodically—every few epochs—instead of post-hoc. This is essential because IVT relies on a local second-order approximation around the previous task optimum $\theta_{t-1}^*$ to guide the increment vector. If the model drifts too far due to SGD, higher-order curvature invalidates this approximation, and the transformed vector may fail to preserve low-loss connectivity. Periodic updates keep the model within the local region where the approximation remains valid, ensuring both efficiency and theoretical soundness.
Thus, IVT functions as an online trajectory controller rather than a post-hoc fix. It involves only simple matrix operations on parameter vectors, incurs negligible overhead, and is easy to implement—requiring just a few lines of PyTorch code. The method integrates seamlessly into existing CIL frameworks, and pseudocode is provided in Algorithm~\ref{algo}.

\begin{algorithm}[t]
\caption{Increment Vector Transformation (IVT)}
\label{algo}
\begin{algorithmic}[1]
    \STATE Train $\hat{\theta}_1 = \theta_1$ on task $\mathcal{T}_1$
    \STATE Compute $F_1 = \mathbb{E}_{(x, y) \in \mathcal{T}_1} \left[\nabla \mathcal{L}_1(x, y)\right]^2$
    \FOR{incremental task $\mathcal{T}_t \in \{\mathcal{T}_2, \mathcal{T}_3, \cdots \}$}
        \STATE Initialize $\theta_t^{(1)} = \hat{\theta}_{t-1}$
        \FOR{epoch $m = 1, 2, \dots, M$}
            \STATE Initialize $F_t = \mathbf{0}$
            \FOR{mini-batch $\mathcal{B}_i$ in training data}
                \STATE Compute $g_i = \mathbb{E}_{(x, y) \in \mathcal{B}_i} \left[ \nabla \mathcal{L}_t(x, y) \right]$
                \STATE Update $\theta_t^{(m)} \gets \texttt{CILMethod}(\theta_t^{(m)}, g_i)$
            \ENDFOR
            \STATE Compute $F_t = \mathbb{E}_i(g_i^2)$ 
            
            \COMMENT{IVT is applied every $I$ epochs}
            
            \IF{$m \bmod I = 0$}
                \STATE $\theta_t^{(m)} \gets \hat{\theta}_{t-1} + (\bar{F}_{t-1} + \bar{F}_t)^{-1} \bar{F}_t (\theta_t^{(m)} - \hat{\theta}_{t-1})$
            \ENDIF
        \ENDFOR
        \STATE Update Fisher: $\bar{F}_t =\bar{F}_{t-1} + F_t$
        \STATE Output: $\hat{\theta}_t =\theta_t^{(M)}$
    \ENDFOR
\end{algorithmic}
\end{algorithm}

\section{Experiment}

\paragraph{Datasets}
We conduct extensive experiments on four representative datasets: CIFAR-100 \cite{krizhevsky2009Learning}, FGVCAircraft \cite{maji2013fine}, ImageNet-Subset \cite{Deng2009ImageNetAL}, and ImageNet-Full \cite{Deng2009ImageNetAL}.
CIFAR-100 is a widely used benchmark for continual learning, consisting of 60,000 $32\times32$ images across 100 object categories with significant inter-class similarity. FGVCAircraft is a fine-grained classification dataset containing 6,667 images from 100 aircraft variants, which presents greater challenges for knowledge retention due to subtle visual differences between classes.
ImageNet-Subset and ImageNet-Full are derived from the large-scale ImageNet ILSVRC dataset, with the subset containing 100 randomly sampled classes and the full version covering all 1,000 classes. These datasets offer increasing complexity and diversity, enabling us to evaluate scalability and generalization across extensive tasks.
For methods trained from scratch, the initial task comprises half of the classes to allow for meaningful model initialization, while the remaining classes are evenly distributed across subsequent incremental tasks. In contrast, for pre-trained continual learning methods, all classes are uniformly distributed across tasks. To ensure reproducibility and eliminate bias from class sequencing, the class order is randomized using a fixed seed (1993).

\paragraph{Evaluation Metrics}
Following standard evaluation protocols, we report average accuracy: $AA = \frac{1}{T} \sum_{t=1}^{T} a_t$, and last accuracy: $LA = a_T$, where $a_t$ denotes accuracy over all seen classes after task $t$. Forgetting is measured as \cite{chaudhry2018riemannian}: $FM = \frac{1}{T-1} \sum_{i=1}^{T-1}\max_{t \in \{i, T-1\}} (a_{t, i} - a_{T, i})$, with $a_{t,i}$ denoting task $i$ accuracy after training on task $t$. 
The Average Improvement across adapted methods is reported:
$Avg.\  Imp. = \frac{1}{N} \sum_{t=1}^{N} M'-M$ where $M$ and $M'$ are the metric values before and after applying IVT. 

\paragraph{Comparison Methods}
IVT is orthogonal to existing CIL approaches, serving as a plug-in to enhance their performance. To verify the general effectiveness of IVT, we integrate it with exemplar-based methods: PODNet \cite{Douillard2020PODNetPO} and MRFA \cite{zheng2024multi},  non-exemplar methods: PASS \cite{zhu2021prototype} and FCS \cite{li2024fcs}, and pre-trained CIL methods: FLYP \cite{goyal2023finetune}, SLCA \cite{zhang2023slca}. 
Additionally, we report results for iCaRL \cite{Rebuffi2016iCaRLIC}, BiC \cite{wu2019large}, LUCIR \cite{Hou2019LearningAU}, Mnemonics \cite{liu2020mnemonics}, GeoDL \cite{simon2021learning}, EOPC \cite{wen2023optimizing}, SSRE \cite{zhu2022self}, SPU \cite{zhang2024overcoming}, RAPF \cite{huang2024class}, CoFiMA \cite{marouf2024weightedensemblemodelsstrong} and MagMax \cite{marczak2024magmax} as baseline comparisons. 

\paragraph{Implementation Details}
For methods trained from scratch, we use ResNet-32 \cite{he2016deep} (stride 8) for CIFAR-100 and ResNet-18 \cite{he2016deep} (stride 32) for ImageNet. Exemplar-based methods are trained with SGD (initial learning rate 0.1) for 160 epochs on CIFAR-100 and 90 epochs on ImageNet. The memory size $|\mathcal{M}|$ is set to 20 samples per class unless stated otherwise.
For non-exemplar methods, we use Adam (learning rate 0.001) for 100 epochs, while pre-trained methods use CLIP-ViT/B-16 \cite{radford2021learning} with Adam (learning rate 5e-6)  for 10 epochs. 
For scratch-trained methods, the batch size is 128 and the IVT interval is 10 epochs. For pre-trained methods, the batch size is 64 and the IVT interval is 3 epochs.

\begin{table*}[t]
\centering
\caption{Comparative results (\%) on CIFAR-100 with different numbers of incremental tasks. The results are averaged over 3 random runs, with both the mean and standard deviation reported. } 
\setlength{\tabcolsep}{4.2pt}
\resizebox{1\textwidth}{!}{%
\begin{tabular}{
lc
ccc|
ccc|
ccc
}
\toprule[1pt]
\multirow{2}{*}{Method}  &Exemplar & \multicolumn{3}{c|}{5 Tasks} & \multicolumn{3}{c|}{10 Tasks} & \multicolumn{3}{c}{25 Tasks} \\ 
& -Free &  {$AA \uparrow$}   &       {$LA \uparrow$}  &  {$FM \downarrow$} &  {$AA \uparrow$}   &       {$LA \uparrow$}  &  {$FM \downarrow$} &  {$AA \uparrow$}   &       {$LA \uparrow$}  &  {$FM \downarrow$} \\
\midrule
iCaRL \cite{Rebuffi2016iCaRLIC} & \xmark & 57.82 & 47.46 & 19.35 & 53.13 & 44.55 & 20.54 & 48.64 & 40.32 & 23.87 \\
LUCIR \cite{Hou2019LearningAU} & \xmark & 63.32 & 55.07 & 19.30 & 61.10 & 52.46 & 22.02 & 58.39 & 48.57 & 27.16 \\
EOPC \cite{wen2023optimizing} & \xmark & 65.06 & 55.69 & 8.93 & 63.64 & 54.05 & 8.24 & 61.35 & 51.24 & 11.94 \\
SSRE \cite{zhu2022self} & \cmark  & 53.01 & 44.23 & 15.46 & 51.17 & 42.68 & 14.47 & 50.40 & 41.80 & 12.44\\
CoFiMA \cite{marouf2024weightedensemblemodelsstrong} & \cmark & 36.51 & 7.72 & 42.16 & 26.34& 2.37 & 35.97 & 16.00 & 1.81 & 21.58\\ 
MagMax \cite{marczak2024magmax} & \cmark  & 29.11 & 3.70 &39.32 & 19.69 & 4.10 &46.06 & 16.63 & 5.21 & 52.99\\
\midrule
PODNet \cite{Douillard2020PODNetPO} & \xmark & 64.00\textsubscript{ (0.54)} & 54.47\textsubscript{ (0.88)} & 17.72\textsubscript{ (0.27)} & 62.47\textsubscript{ (0.51)} & 52.89\textsubscript{ (0.80)} & 21.57\textsubscript{ (0.38)} & 59.82\textsubscript{ (0.84)} & 50.71\textsubscript{ (0.96)} & 25.90\textsubscript{ (0.89)}\\ 
 \rowcolor{ cyan!5} \hspace{3pt} \textit{w/} IVT (Ours)  & \xmark & 65.36\textsubscript{ (0.24)} &  56.62\textsubscript{ (0.47)} & 11.68\textsubscript{ (0.47)} & 63.45\textsubscript{ (0.72)} & 55.41\textsubscript{ (0.72)} & 12.87\textsubscript{ (0.47)} & 61.74\textsubscript{ (0.98)} & 53.43\textsubscript{ (1.15)} & 15.84\textsubscript{ (0.76)}\\ 
MRFA \cite{zheng2024multi} & \xmark & 61.85\textsubscript{ (0.14)} & 51.28\textsubscript{ (0.13)}  & 21.39\textsubscript{ (0.24)}& 58.63\textsubscript{ (0.24)} & 48.46\textsubscript{ (0.29)}  & 23.29\textsubscript{ (0.94)}&55.10\textsubscript{ (0.22)} & 44.59\textsubscript{ (0.42)}  & 25.18\textsubscript{ (0.89)}  \\ 
 \rowcolor{ cyan!5}\hspace{3pt} \textit{w/} IVT (Ours) & \xmark& 63.50\textsubscript{ (0.07)} & 53.79\textsubscript{ (0.07)}  & 8.09\textsubscript{ (0.27)}& 61.86\textsubscript{ (0.10)} & 51.51\textsubscript{ (0.07)}  & 7.25\textsubscript{ (0.32)} & 60.45\textsubscript{ (0.03)} & 48.74\textsubscript{ (0.03)}  & 7.70\textsubscript{ (0.04)} \\ 
PASS \cite{zhu2021prototype}& \cmark  & 58.60\textsubscript{ (0.11)} & 49.05\textsubscript{ (0.21)} &  8.36\textsubscript{ (0.08)} & 56.55\textsubscript{ (0.65)} & 44.95\textsubscript{ (1.43)} &  11.75\textsubscript{ (1.26)} & 54.42\textsubscript{ (0.34)} & 41.24\textsubscript{ (0.53)} &  13.29\textsubscript{ (0.46)}\\
 \rowcolor{ cyan!5} \hspace{3pt} \textit{w/} IVT (Ours)& \cmark & 59.29\textsubscript{ (0.10)} & 49.75\textsubscript{ (0.13)} & 8.14\textsubscript{ (0.35)} & 58.04\textsubscript{ (0.63)} & 46.78\textsubscript{ (1.20)}& 11.29\textsubscript{ (1.01)}  & 57.81\textsubscript{ (0.13)} & 46.36\textsubscript{ (0.47)} & 10.75\textsubscript{ (0.14)}\\
FCS \cite{li2024fcs} & \cmark & 60.04\textsubscript{ (0.14)} & 50.87\textsubscript{ (0.22)}& 7.61\textsubscript{ (0.09)}& 59.64\textsubscript{ (0.06)}& 49.71\textsubscript{ (0.32)}&  8.92\textsubscript{ (0.15)}& 58.40\textsubscript{ (0.06)}& 46.90\textsubscript{ (0.13)}& 10.98\textsubscript{ (0.27)} \\
 \rowcolor{ cyan!5} \hspace{3pt}\textit{w/} IVT (Ours)& \cmark & 60.96\textsubscript{ (0.12)} & 52.07\textsubscript{ (0.05)}&  6.80\textsubscript{ (0.42)}& 60.99\textsubscript{ (0.14)}& 51.63\textsubscript{ (0.15)}&  7.79\textsubscript{ (0.26)} & 60.18\textsubscript{ (0.08)}& 49.72\textsubscript{ (0.21)}&  7.00\textsubscript{ (0.47)} \\
 \addlinespace[1pt]
\hdashline
\addlinespace[2pt]
  \rowcolor{cyan!10} \textit{Avg. Imp.}& & \textit{+1.16} & \textit{+1.64}& \textit{-5.59
} & \textit{+1.76}  & \textit{+2.33} & \textit{-6.58}& \textit{+3.11} & \textit{+3.70} & \textit{-8.52} \\
\bottomrule[1pt]
\end{tabular}}
\label{tab:cifar}
\end{table*}

\begin{table*}[t]
\centering
\caption{Comparative results (\%) on ImageNet-Subset and ImageNet-Full with different numbers of incremental tasks. Results marked with $^\dag$ are referenced from \cite{simon2021learning}.} 
\setlength{\tabcolsep}{6pt}
\resizebox{1\textwidth}{!}{%
\begin{tabular}{
  lcccc|ccc|ccc|ccc
}
\toprule[1pt]
\multirow{3}{*}{Method}  &   & \multicolumn{9}{c|}{ImageNet-Subset} & \multicolumn{3}{c}{ImageNet-Full} \\ 
&\multirow{2}{*}{Exemplar}  & \multicolumn{3}{c|}{5 Tasks} & \multicolumn{3}{c|}{10 Tasks} & \multicolumn{3}{c|}{25 Tasks}  & \multicolumn{3}{c}{10 Tasks} \\ \cmidrule[0.5pt](lr){3-14}
&  -Free &   {$AA \uparrow$}   &   {$LA \uparrow$}  &  {$FM \downarrow$} &  {$AA \uparrow$}   &   {$LA \uparrow$}  &  {$FM \downarrow$} &  {$AA \uparrow$}   &   {$LA \uparrow$}  &  {$FM \downarrow$} &  {$AA \uparrow$}   &   {$LA \uparrow$}  &  {$FM \downarrow$}\\
\midrule
iCaRL$^\dag $ \cite{Rebuffi2016iCaRLIC} & \xmark & 65.44 & 53.14 & {--} & 59.88 & 49.20 & {--} & 52.97 & {--} & {--} & 46.89 & 38.69 & {--} \\
BiC$^\dag $  \cite{wu2019large}& \xmark  & 70.07 & 60.34 & {--} & 64.96 & 56.18 & {--} & 57.73 & {--} & {--} & 58.72 & 51.23 & {--} \\
LUCIR$^\dag $ \cite{Hou2019LearningAU} & \xmark & 70.84 & 60.39 & {--}  & 68.32 & 57.95 & {--}  & 61.44 & {--} & {--}  & 61.63 & 52.73 & {--}  \\
Mnemonics$^\dag $ \cite{liu2020mnemonics} & \xmark & 72.58 & 64.58 & {--} & 71.37 & 62.52 & {--} & 69.74 & {--} & {--} & 63.01 & 55.45 & {--} \\
GeoDL$^\dag $ \cite{simon2021learning}& \xmark  & 73.87 &  67.37 & {--} & 73.55 & 65.57 & {--} & 71.72 & {--} & {--} & 64.46 & 56.75  & {--} \\

SSRE \cite{zhu2022self}  & \cmark &  65.59 & 55.08 & 26.16 & 63.25 & 53.38 & 29.06 & 62.02 & 53.00 & 32.06  & -- & -- & --\\
\midrule
PODNet \cite{Douillard2020PODNetPO} & \xmark & 73.53 & 62.96 & 17.26  & 69.36 & 59.60 & 21.59 & 60.40 & 47.76 & 32.93 & 64.10 & 55.57 & 14.09\\ 
 \rowcolor{ cyan!5}\hspace{3pt} \textit{w/} IVT (Ours)& \xmark & 74.48 & 65.48 & 11.11 & 71.63 & 61.86 & 15.01 & 65.78 & 54.66 & 20.38 &  65.07 & 56.95 &  13.00 \\ 
MRFA \cite{zheng2024multi}& \xmark& 65.68 & 51.84
 & 35.01 & 59.93 & 46.08 &39.07 & 54.46 & 41.98 & 41.35 & 46.65 & 36.99 & 18.70 \\ 
 \rowcolor{ cyan!5}\hspace{3pt} \textit{w/} IVT (Ours) & \xmark& 75.10 & 65.14 & 13.16
 & 73.02 & 61.60 &12.38 & 70.00 & 56.66 & 12.15 & 59.48 & 48.34 & 10.08\\ 
PASS \cite{zhu2021prototype} & \cmark &  68.10 & 54.20 & 24.72 & 65.00 & 51.92 & 29.31  & 57.36 & 39.16 & 41.99 & 48.74 & 40.42 & 19.05\\
 \rowcolor{ cyan!5}\hspace{3pt} \textit{w/} IVT (Ours) & \cmark& 70.55 & 57.48 & 20.38 & 68.49 & 56.72 & 20.54 & 60.64 & 40.64 & 34.96 & 50.22 & 42.09 & 16.37 \\
FCS  \cite{li2024fcs} & \cmark &  74.06 & 63.82 & 17.78  & 73.55 & 63.74 & 20.72 & 70.50 & 56.56 & 26.00 & -- & -- & --\\
\rowcolor{ cyan!5}\hspace{3pt} \textit{w/} IVT (Ours) & \cmark & 76.98 & 66.52 & 14.54 & 76.15 & 66.08 & 16.18 & 72.74 & 57.90 & 24.65& -- & -- & -- \\
 \addlinespace[1pt]
\hdashline
\addlinespace[2pt]
  \rowcolor{cyan!10} \textit{Avg. Imp}. & & \textit{+3.93} & \textit{+5.45}& \textit{-8.90} & \textit{+5.36}  & \textit{+6.23} & \textit{-11.65}& \textit{+6.61} & \textit{+6.10} & \textit{-12.53} & \textit{+5.09} & \textit{+4.80} & \textit{-4.13} \\
\bottomrule[1pt]
\end{tabular}}
\label{tab:imagenet}
\end{table*}

\subsection{Comparative Results}

\paragraph{Adaptation with Scratch-trained CIL}
We integrate IVT with four representative scratch-trained CIL methods—PODNet, MRFA, PASS, and FCS—and report the results in Table\ref{tab:cifar} and Table\ref{tab:imagenet}. Across all methods and settings, IVT consistently improves average accuracy, last task accuracy, and forgetting measure. These improvements are particularly pronounced in more challenging configurations with more incremental tasks (e.g., 25-task settings), indicating IVT's effectiveness in mitigating forgetting over long horizons.
Notably, we also include CoFiMA and MagMax—two recent post-hoc model merging methods—as baselines. As shown in Table~\ref{tab:cifar}, both methods perform poorly under the scratch-trained setting, with accuracy dropping below 30\% in many configurations. This aligns with findings in prior work such as \cite{neyshabur2020being}, which showed that model solutions obtained from disjoint training trajectories often lie in disconnected regions of the loss landscape, resulting in performance barriers along linear paths. As a result, post-hoc approaches that rely on parameter interpolation between such solutions (\textit{e.g.}, CoFiMA, MagMax) are inherently limited in this setting.
These results highlight the advantage of our IVT approach, which explicitly adjusts the model's optimization direction \emph{during training} to remain aligned with low-loss regions. By actively steering updates toward LMC-consistent directions, IVT avoids the connectivity collapse observed in scratch-trained models after multiple optimization steps, leading to more stable task integration and reduced forgetting.

Moreover, IVT also yields consistent improvements for exemplar-free baselines such as PASS and FCS. These results confirm that IVT is not only compatible with exemplar-based strategies but also highly effective in exemplar-free scenarios, where preserving prior knowledge is more challenging due to the absence of rehearsal. This underscores the generality and robustness of IVT across different CIL paradigms.

\paragraph{Adaptation with Pre-trained CIL} 
As shown in Table~\ref{tab:pre-train}, we integrate IVT with several pre-trained CIL methods on CIFAR-100 and FGVCAircraft using CLIP-ViT/B-16 as the backbone. The results confirm the effectiveness of IVT in the pre-trained setting. While pre-training typically leads to a flatter loss landscape that helps mitigate forgetting, IVT further steers the optimization trajectory toward more robust region that may not have been sufficiently leveraged during pre-training. Notably, when applied to simple baselines such as FLYP and SLCA, IVT brings substantial gains across all metrics and datasets, consistently outperforming recent state-of-the-art methods such as RAPF and MagMax. For example, on CIFAR-100 with 10 tasks, IVT improves FLYP’s last accuracy by over 23\% and reduces forgetting by more than 26\%. These consistent improvements across both coarse-grained and fine-grained benchmarks validate that IVT is not only effective in scratch-trained paradigms but also readily generalizes to strong pre-trained continual learners.

\begin{table*}[t]
\centering 
\caption{Comparative results (\%) on CIFAR-100 and FGVCAircraft using pre-trained CLIP-ViT/B-16.} 
\setlength{\tabcolsep}{5pt}
\resizebox{1\linewidth}{!}{
\begin{tabular}{lcccc|ccc|ccc|ccc}
\toprule[1pt]
\multirow{3}{*}{Method}  &  & \multicolumn{6}{c|}{CIFAR-100} & \multicolumn{6}{c}{FGVAircraft} \\ 
& \multirow{2}{*}{Exemplar} & \multicolumn{3}{c|}{5 Tasks} & \multicolumn{3}{c|}{10 Tasks} & \multicolumn{3}{c|}{5 Tasks}  & \multicolumn{3}{c}{10 Tasks} \\ \cmidrule[0.5pt](lr){3-14}
&  -Free &   {$AA \uparrow$}   &   {$LA \uparrow$}  &  {$FM \downarrow$} &  {$AA \uparrow$}   &   {$LA \uparrow$}  &  {$FM \downarrow$} &  {$AA \uparrow$}   &   {$LA \uparrow$}  &  {$FM \downarrow$} &  {$AA \uparrow$}   &   {$LA \uparrow$}  &  {$FM \downarrow$}\\
\midrule
Zero-Shot CLIP \cite{radford2021learning} & - & - & 68.25 & - & - & 68.25 & - & - & 24.45 & - & - & 24.45 & -\\

 SPU \cite{zhang2024overcoming} & \cmark & 84.24& 75.91 & -0.73 &  81.10& 67.45 & 16.40 & 49.99 & 36.24 & 11.19 & 46.87 & 28.29 & 20.85\\
RAPF  \cite{huang2024class} & \cmark & 85.78 & 79.77 & 5.73 & 86.05 &78.64 & 9.59& 50.25 & 33.96 &27.06 & 44.14 & 27.66 & 43.96 \\
CoFiMA  \cite{marouf2024weightedensemblemodelsstrong} & \cmark & 82.25 & 75.31& -12.38 & 78.49 & 72.09 & -5.21  & 58.01 & 40.44 & 11.98 & 50.56 & 32.04 & 19.57 \\
MagMax \cite{marczak2024magmax} & \cmark & 80.42 &  70.74 &  -10.85 & 78.53 & 68.86 & -3.87   & 54.76 & 36.63 & 3.18 & 46.57 & 29.55& 8.75 \\
\midrule
FLYP \cite{zhang2023slca} & \cmark &  77.80 & 64.64 & 10.52 &  69.86 & 52.45 & 29.43   & 57.61 & 39.87 & 20.64 & 45.02 & 23.40 & 39.59 \\
 \rowcolor{ cyan!5}\hspace{3pt} \textit{w/} IVT (Ours) & \cmark  &  87.72 & 80.13 & -9.74 &  85.98 & 75.81 & 3.12  & 58.81 & 40.95 & 8.92 & 52.20 & 33.36 & 13.39 \\
 SLCA \cite{zhang2023slca}& \cmark &79.84  &68.55& 8.21 & 71.39  & 55.40  & 31.97 & 54.93 & 35.52 & 22.05 & 42.40 & 25.29 & 37.02\\
 \rowcolor{ cyan!5}\hspace{3pt} \textit{w/} IVT (Ours) & \cmark & 88.12 & 81.37 & -7.68 &  86.32 & 77.35 & 5.98 & 61.29 & 45.75 & 8.79 & 55.06 & 40.29 & 18.39\\
  \addlinespace[1pt]
\hdashline
\addlinespace[2pt]
\rowcolor{cyan!10} \textit{Avg. Imp}. & - 
& \textit{+9.10} & \textit{+14.16} & \textit{-18.08} 
& \textit{+15.52} & \textit{+22.65} & \textit{-26.15} 
& \textit{+3.78} & \textit{+5.66} & \textit{-12.49} 
& \textit{+9.92} & \textit{+12.48} & \textit{-22.42} \\
\bottomrule[1pt]
\end{tabular}} 
\label{tab:pre-train}
\end{table*}

\begin{figure*}[t]
\centering
\subfloat{\includegraphics[width=.32\linewidth]{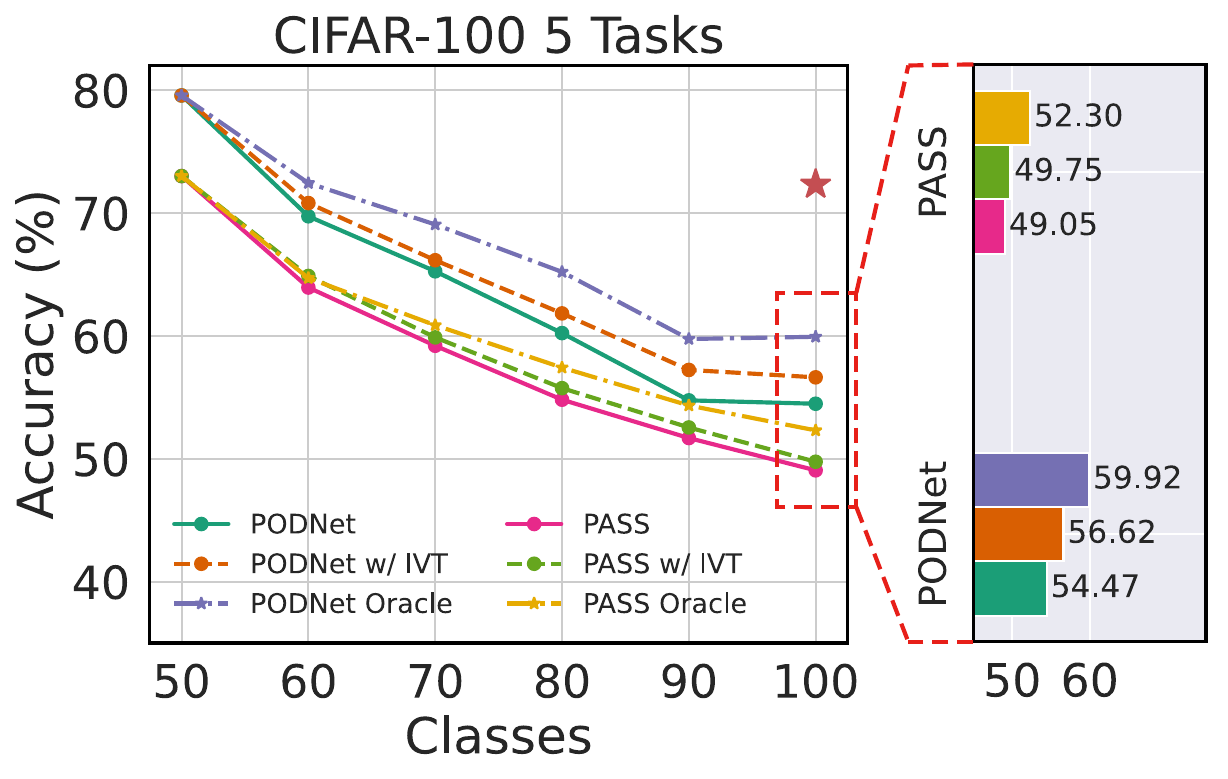}} \hfill
\subfloat{\includegraphics[width=.32\linewidth]{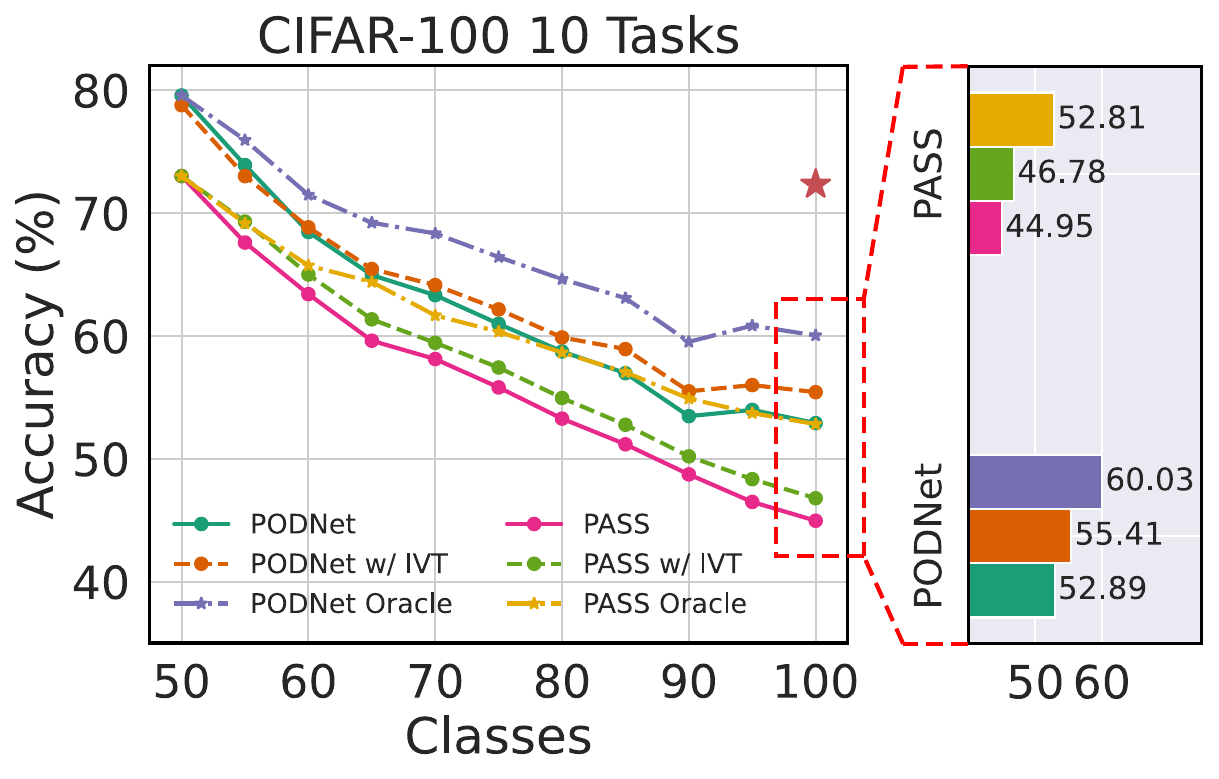}} \hfill
\subfloat{\includegraphics[width=.32\linewidth]{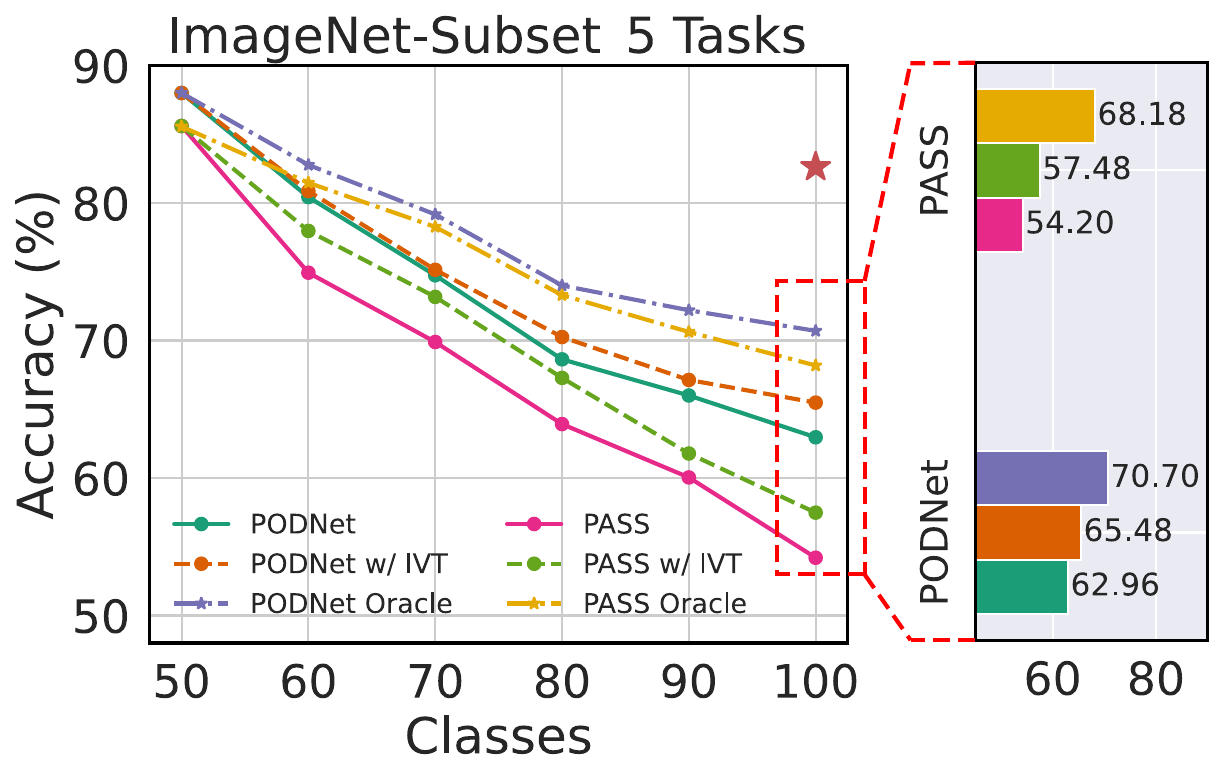}} 
\caption{Comparison results with oracles of PODNet and PASS. The bar chart on the right shows the last accuracy.
The oracle is trained incrementally with full data replay, while the multi-task optimum (red star) represents training on the entire dataset at once.}
\label{fig:oracle}
\end{figure*}

\paragraph{Comparison with CIL Oracles}
We further compare IVT with CIL oracles, obtained via incremental training with full access to historical data, as shown in Figure~\ref{fig:oracle}. Notably, the CIL oracle differs from the multi-task optimum (MTL) in two key aspects:
(1) The MTL optimum is derived by jointly training on all tasks at once with perfectly balanced and temporally aligned data—an idealized setting that violates the sequential nature of continual learning. In contrast, the CIL oracle starts from the previous model and incrementally trains on replayed data, preserving the structure and constraints of the underlying CIL method.
(2) The CIL oracle incorporates anti-forgetting mechanisms (\textit{e.g.}, regularization or distillation) that promote stability but may limit plasticity, while the MTL model has no such constraints. As a result, the CIL oracle often underperforms MTL in last accuracy, but provides a more realistic and attainable upper bound for continual learners operating under practical assumptions.
Crucially, our results show that IVT effectively narrows the gap between strong CIL baselines and their corresponding oracles, demonstrating improved retention and forward transfer. This highlights IVT's potential to bring practical CIL systems closer to their achievable limits, without relying on idealized joint training.

\subsection{Analytical Experiments}

\paragraph{Linear Mode Connectivity along the Linear Path}
Similar to Section \ref{sec2}, we analyze the LMC of the IVT model. As shown in Figure \ref{lmc_2}, the IVT model exhibits behavior closely aligned with the CIL oracle along the linear path: as $\lambda$ increases, the IVT model experiences only a slight drop in accuracy, while maintaining a comparable distance from the old model as the oracle. This observation suggests that both the IVT model and the oracle remain within the same loss basin as the previous model, thereby preserving performance on previously learned tasks.
Furthermore, Figure \ref{blc_2} demonstrates that the IVT model achieves a stability-plasticity trade-off similar to the oracle. Compared to Figure \ref{blc_1}, IVT substantially mitigates the conflict between retaining old knowledge and acquiring new information, enabling more balanced continual adaptation. This underscores the robustness of IVT in navigating the optimization landscape: it not only avoids catastrophic forgetting but also promotes smoother transitions between tasks, a property often absent in conventional CIL baselines.
Collectively, these findings confirm that IVT preserves the geometric structure of optimal solutions across tasks, providing both empirical and theoretical support for its capacity to approximate the oracle in terms of loss connectivity and transfer dynamics.

\begin{figure*}[t]
\centering
\subfloat{\includegraphics[width=.24\linewidth]{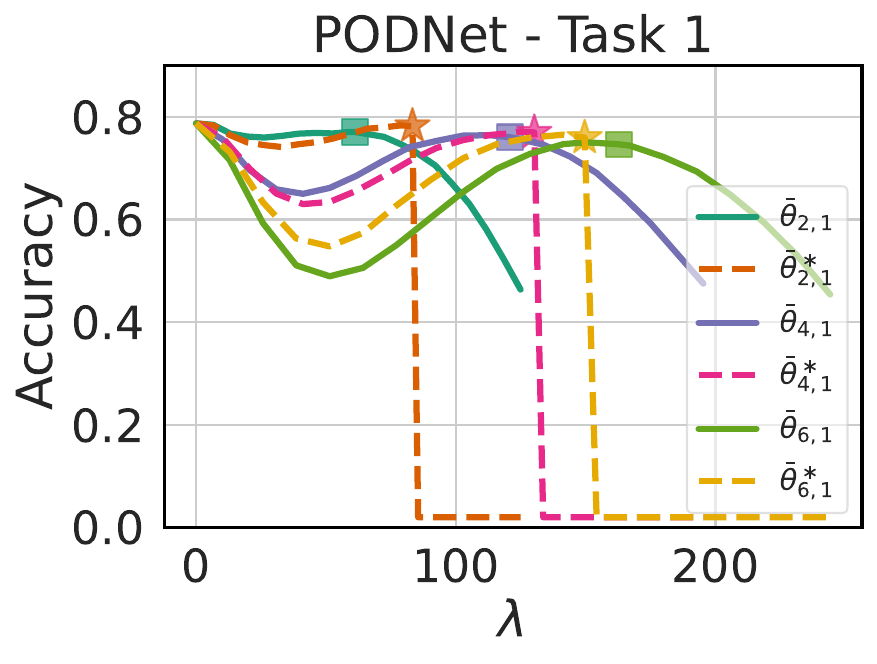}} \hfill
\subfloat{\includegraphics[width=.24\linewidth]{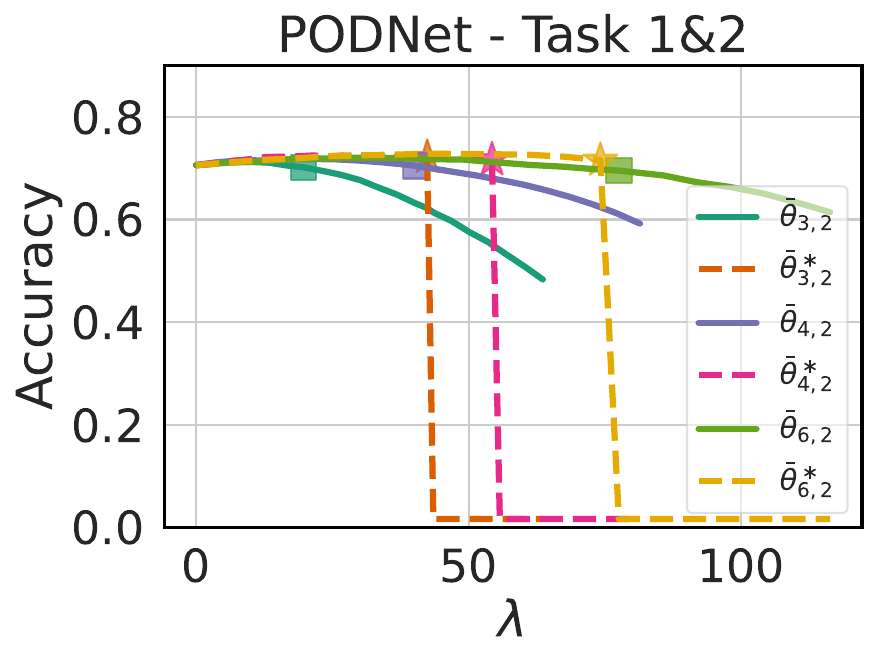}} \hfill
\subfloat{\includegraphics[width=.24\linewidth]{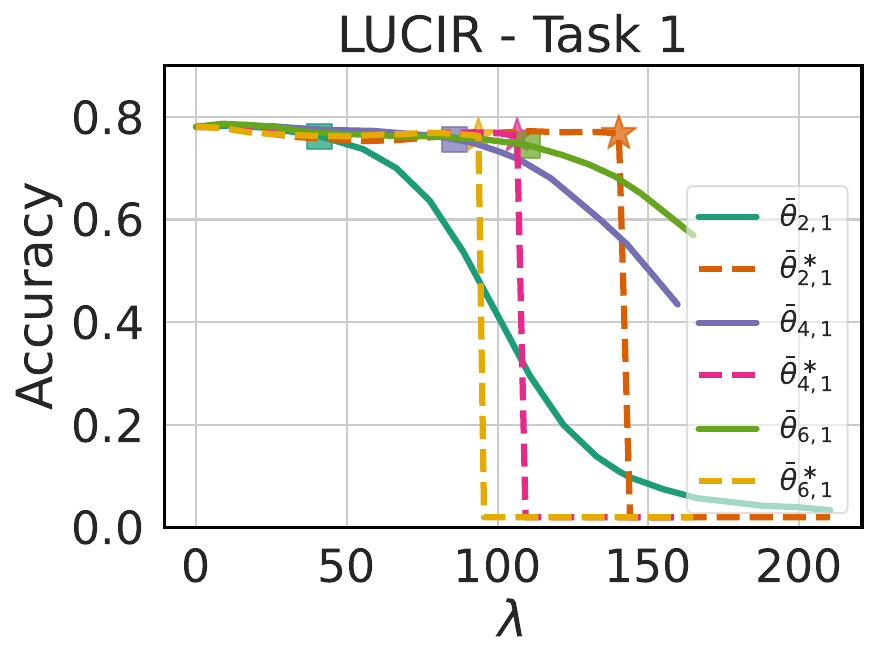}} \hfill
\subfloat{\includegraphics[width=.24\linewidth]{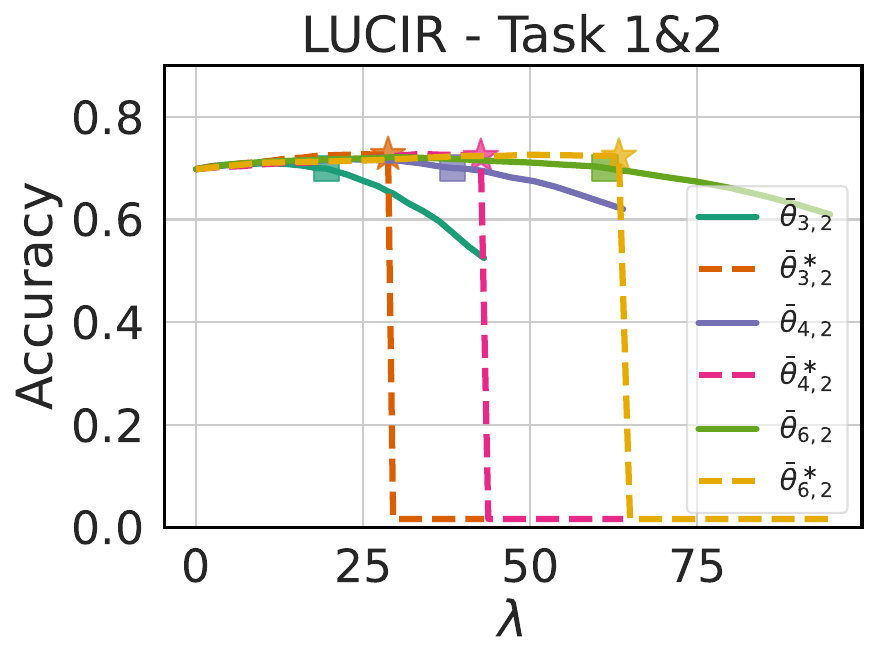}}  
\caption{Evaluating accuracy consistency along the linear path on CIFAR-100 for increments of 5 tasks.  The star and square denote the CIL oracle $\theta_{t}^*=\bar{\theta}^*(\hat{\lambda}^*)$ and the IVT model $\theta_{t}=\bar{\theta}(\hat{\lambda})$.}  
\label{lmc_2}
\end{figure*}

\begin{figure*}[t]
\centering
\hspace{10pt}\subfloat{\includegraphics[width=.3\linewidth]{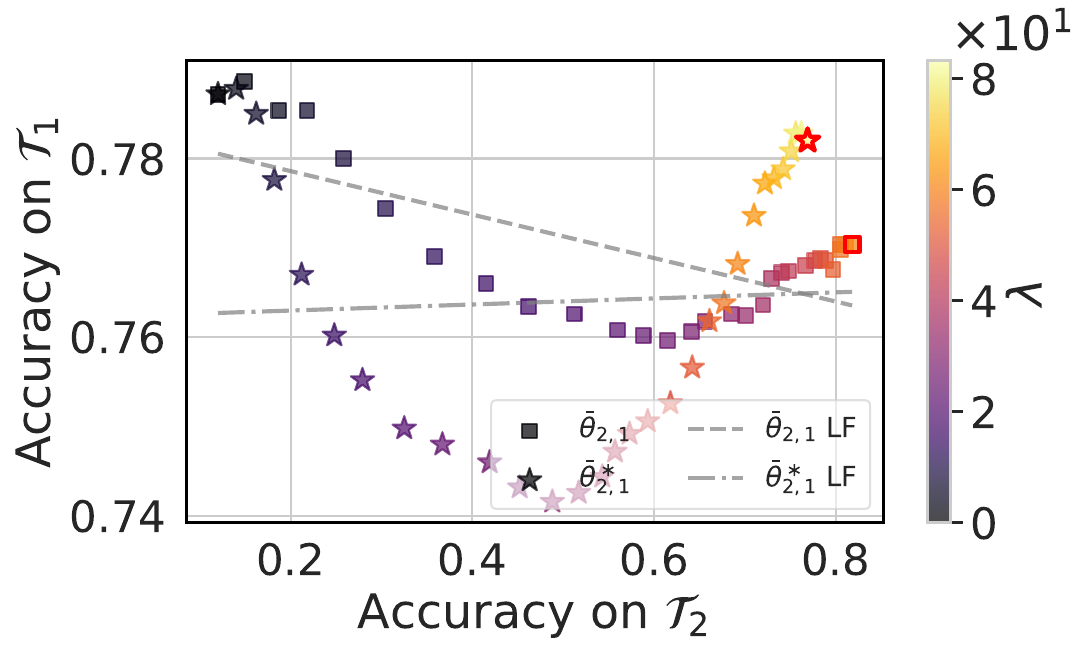}}\hfill
\subfloat{\includegraphics[width=.3\linewidth]{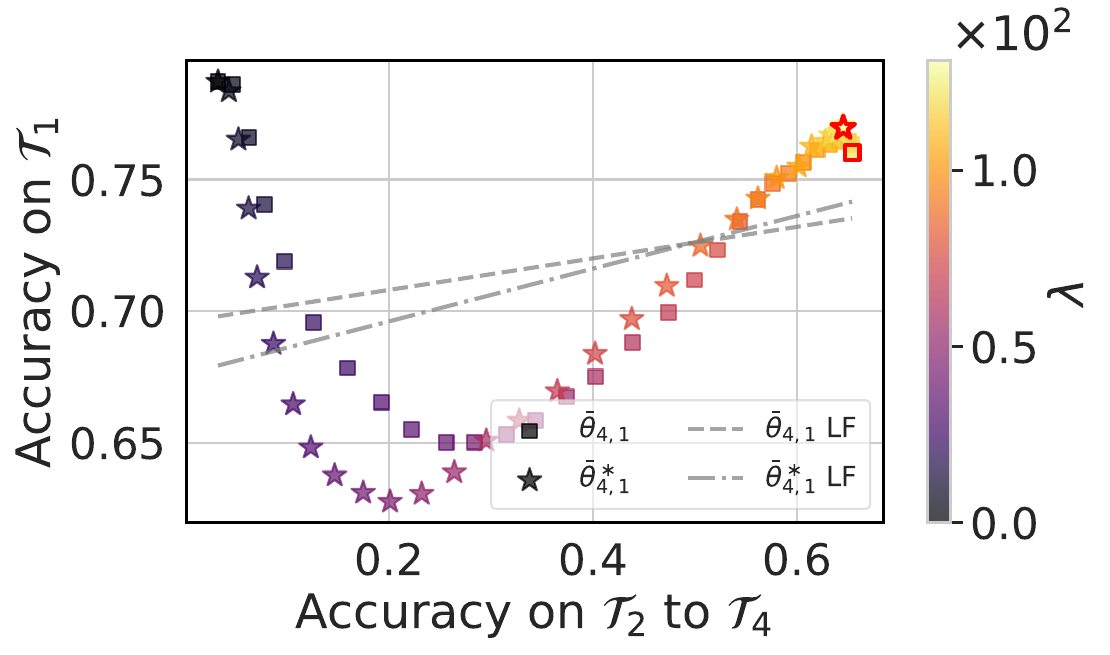}}\hfill
\subfloat{\includegraphics[width=.3\linewidth]{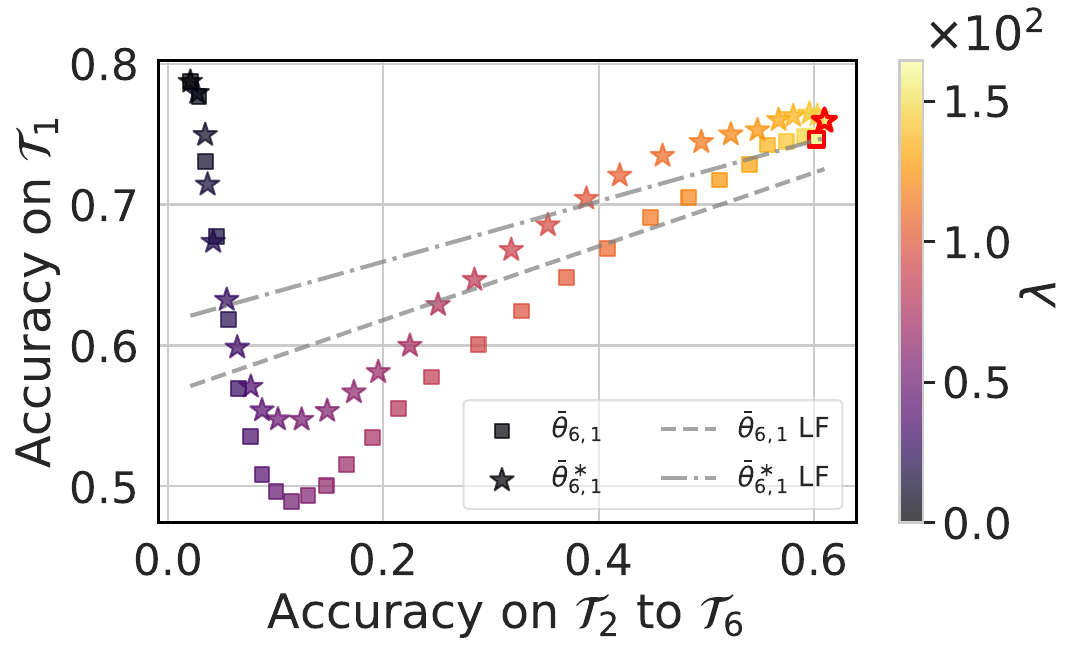}} \hspace{10pt}
\caption{Evaluating stability-plasticity trade-off achieved by PODNet along the linear path on CIFAR-100 for increments of 5 tasks. The red-edged star and square denote the CIL oracle $\bar{\theta}^*(\hat{\lambda}^*)$ and the IVT model $\bar{\theta}(\hat{\lambda})$, respectively. } 
\label{blc_2}
\end{figure*}

\paragraph{Loss Landscape Visualization}
To better understand the relationships between the old model $\theta_{t-1}^*$, the incremental model $\theta_{t}$, the IVT model $\hat{\theta}_{t}$, and the oracle $\theta_{t}^*$, we visualize the test loss landscape in the parameter vector space, following \cite{Mirzadeh2020LinearMC, verwimp2021rehearsal}. As shown in Figure ~\ref{lsp}, the IVT model $\hat{\theta}_{t}$ remains significantly closer to the oracle $\theta_{t}^*$ compared with the incremental solution $\theta_t$, indicating a better alignment with the optimal solution across tasks.
The visualization reveals that $\theta_{t-1}^*$, $\hat{\theta}_{t}$, and $\theta_{t}^*$ are located within a connected low-loss basin, suggesting that the IVT update effectively preserves performance on old tasks. In contrast, $\theta_t$ veers into regions with elevated test loss, implying that standard incremental updates induce a shift toward suboptimal configurations with limited generalization across tasks.

\paragraph{Analysis of Training Time}
To evaluate the computational efficiency of IVT, we conduct a time complexity analysis to measure its training overhead. As shown in Figure ~\ref{fig:int} (right), IVT introduces only a negligible increase in training time compared to the baseline, indicating that its lightweight transformation step imposes minimal computational burden.
For a more comprehensive comparison, we benchmark IVT against EOPC, which performs an additional optimization phase after each task to explicitly search for a low-loss path. This extra stage significantly increases training costs, with EOPC requiring approximately $0.5\times$ to $1.6\times$ more training time than the baseline.
These results highlight the high computational efficiency of IVT. Unlike methods that rely on costly post-hoc optimization, IVT achieves strong performance gains with minimal time overhead, making it highly practical for real-world continual learning scenarios where both accuracy and efficiency are essential.

\begin{figure}[t]
\centering  
\subfloat{\includegraphics[width=0.47\linewidth]{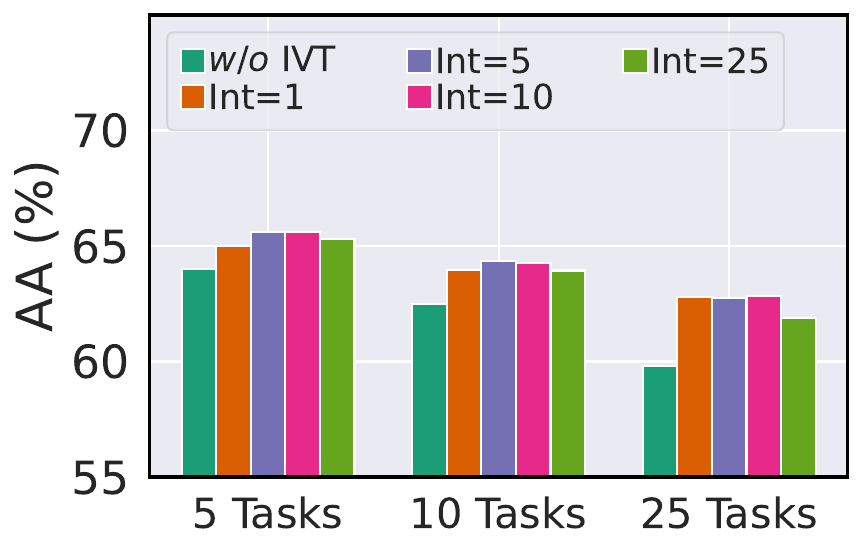}} \hfill
\subfloat{\includegraphics[width=0.50\linewidth]{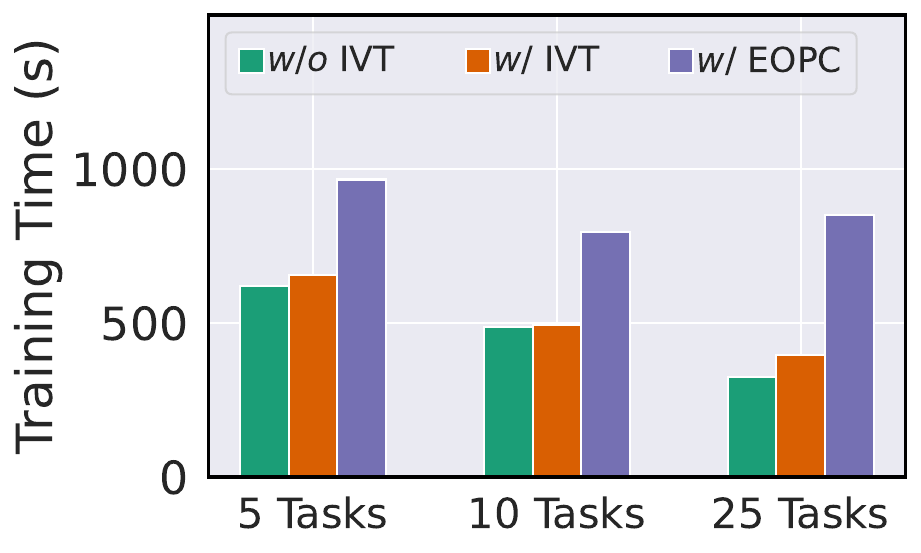}} 
  \caption{Analysis of the IVT interval (left) and training time for individual tasks (right) conducted with PODNet on CIFAR-100.} 
  \label{fig:int}
\end{figure}

\paragraph{The Effect of IVT Interval}
The interval between consecutive IVT applications is critical—too infrequent may lead to poor alignment with the loss landscape, while too frequent could interfere with task learning.
Hence, identifying a suitable interval is key to balancing transformation accuracy and connectivity.
To evaluate this, we conduct a sensitivity analysis on the IVT interval, which is notably the sole hyper-parameter of the method. As shown in Figure ~\ref{fig:int} (left), IVT demonstrates strong robustness across a wide range of interval settings. Regardless of the specific interval value, IVT consistently yields improvements over the baseline, underscoring its stability and reliability in practical deployment.
These results suggest that IVT does not require precise tuning to be effective, making it an attractive choice for continual learning scenarios where hyper-parameter optimization may be costly or infeasible. The broad performance plateau further indicates that IVT is resilient to scheduling variations, reinforcing its practicality and generality across diverse settings.

\begin{figure*}[t]
\centering
\hspace{10pt}\subfloat{\includegraphics[width=.3\linewidth]{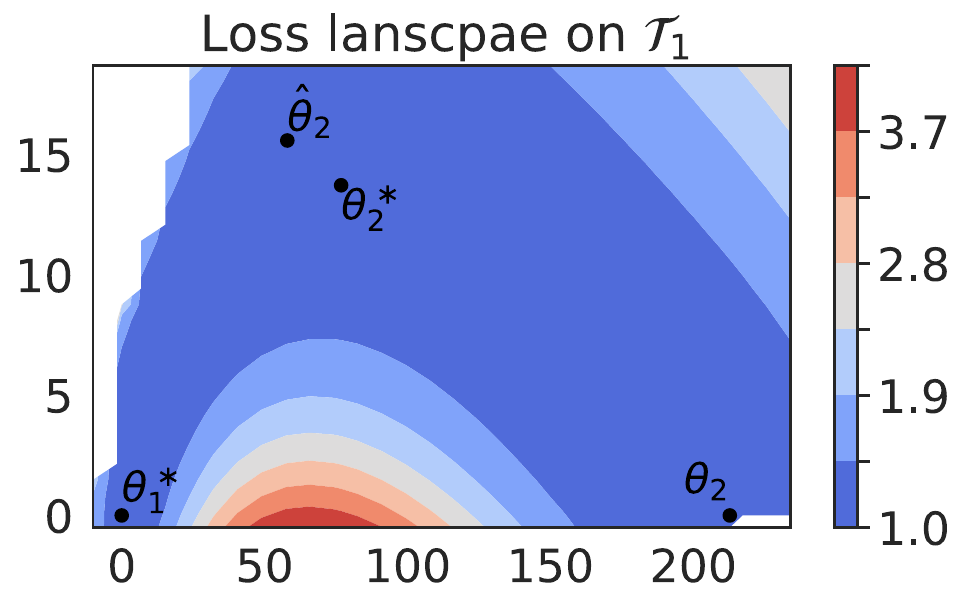}} \hfill
\subfloat{\includegraphics[width=.3\linewidth]{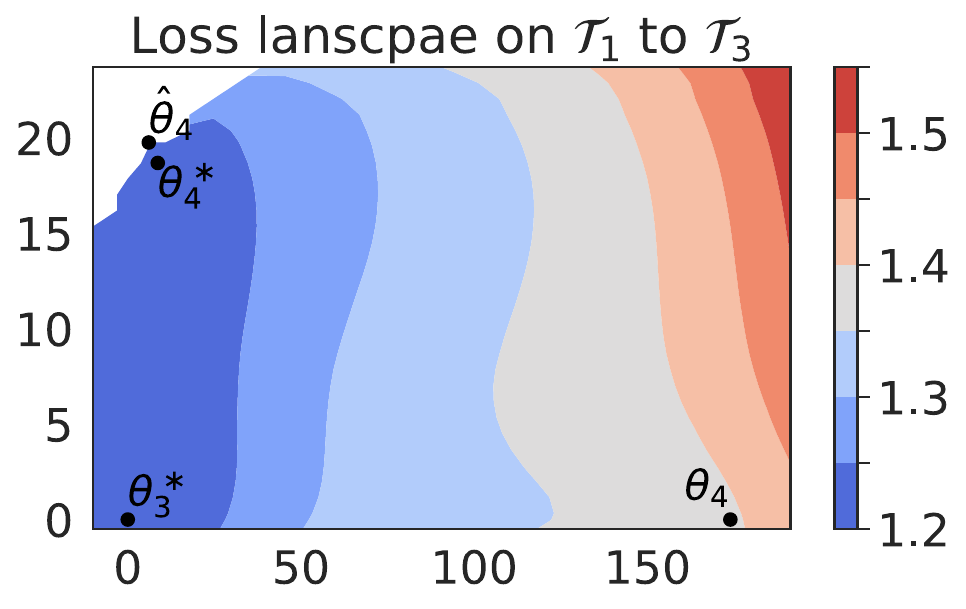}} \hfill
\subfloat{\includegraphics[width=.3\linewidth]{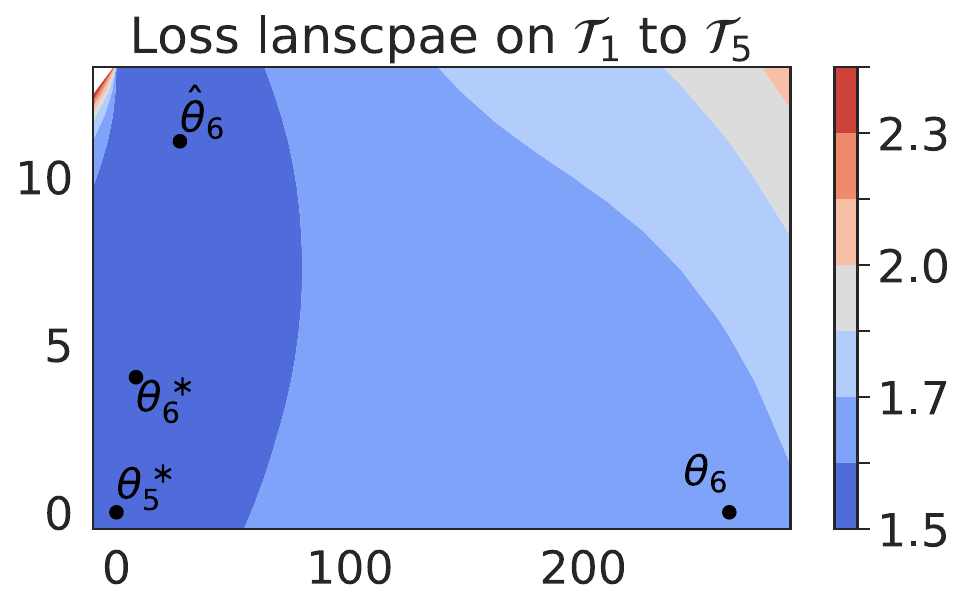}}\hspace{10pt}  
\caption{Visualization of the test loss landscape in the parameter vector space, produced by PODNet on CIFAR-100 with 5-task increments. The vector space is defined by orthogonalizing $\hat{\theta}_t - \theta^*_{t-1}$ and $\theta_t - \theta^*_{t-1}$, with the PODNet oracle $\theta^*_t$ as the projection onto this plane.}   
\label{lsp}
\end{figure*}

\paragraph{The Effect of Memory Size}
IVT does not rely on exemplar replay, making it broadly applicable to both exemplar-based and non-exemplar continual learning methods. To further assess its adaptability under constrained memory settings, we evaluate IVT on exemplar-based baselines using varying memory sizes, and compare its performance with the  path optimization method EOPC.
As shown in Table~\ref{tab:exem}, IVT consistently improves the baseline performance across different memory budgets. Notably, in the low-exemplar regime, where the risk of catastrophic forgetting is amplified due to insufficient rehearsal signals, IVT delivers substantial gains over the original model, demonstrating its capacity to preserve prior knowledge even with minimal supervision from exemplars.

As the memory size increases, the performance gap between IVT and EOPC narrows, with IVT achieving comparable results despite its simpler and more generalizable formulation. This observation highlights the robustness of IVT under strict memory constraints and suggests that its benefits are complementary to exemplar replay mechanisms. In scenarios where memory is severely limited or unavailable, IVT offers a compelling alternative for maintaining stability and accuracy in continual learning.

\begin{table}[t]
	\centering
 	\caption{Ablating memory size $|\mathcal{M}|$ under 5 tasks on CIFAR-100.} 
    \setlength{\tabcolsep}{6pt}
	\resizebox{1\linewidth}{!}{
		\begin{tabular}{l|cc|cc|cc}
			\toprule[1pt]
			    \multirow{2}{*}{Method} & \multicolumn{2}{c|}{$|\mathcal{M}|=5$}  & \multicolumn{2}{c|}{$|\mathcal{M}|=10$}  & \multicolumn{2}{c}{$|\mathcal{M}|=20$}  \\ 
& {$AA \uparrow$}    &  {$FM \downarrow$} &  {$AA \uparrow$}    &  {$FM \downarrow$}  &  {$AA \uparrow$}    &  {$FM \downarrow$} \\
\midrule
PODNet          & 54.63 & 26.06 & 61.28  & 21.82 & 64.00  & 17.72\\ 
 \hspace{3pt} \textit{w/} EOPC & 55.91 & 15.57 & 62.90  & 11.11 & 65.06  & 8.93\\ 
 \rowcolor{ cyan!5} \hspace{3pt} \textit{w/} IVT (Ours) & 62.04 & 17.11 & 63.02  & 13.06 & 65.36 & 11.68\\ 
			\bottomrule[1pt] 
        \end{tabular}}   
  \label{tab:exem} 
\end{table}

\paragraph{Comparison of Full and Diagonal FIM}
To balance effectiveness and efficiency, we compare the diagonal and full Fisher Information Matrix (FIM) approximations. The diagonal FIM, by assuming independence between parameters, enables fast computation and memory usage comparable to standard baseline methods.
To explore whether richer parameter dependencies improve performance, we further evaluate a full FIM variant restricted to convolutional kernels, capturing intra-channel parameter interactions while avoiding the full cost of global FIM computation. As shown in Table~\ref{tab:off}, incorporating the full FIM yields only marginal accuracy improvements across tasks. However, this comes at a significant cost: GPU memory usage increases by approximately 2.5$\times$, and computational time also grows notably due to the dense matrix operations.
This trade-off highlights the practicality of the diagonal FIM, which offers a favorable balance between performance and scalability. Particularly in large-scale or resource-constrained continual learning settings, the diagonal approximation remains the preferred choice, delivering strong results with minimal computational overhead.

\begin{table}[t]
\centering 
\caption{Comparison of IVT variant on CIFAR-100.}
\setlength{\tabcolsep}{4pt}
\resizebox{1\linewidth}{!}{
\begin{tabular}{l|ccc|ccc}
\toprule[1pt]
\multirow{2}{*}{Method} & \multicolumn{3}{c|}{5 tasks} &  \multicolumn{3}{c}{10 tasks}  \\
 & AA($\uparrow$) & LA($\uparrow$) & FM($\downarrow$) & AA($\uparrow$) & LA($\uparrow$) & FM($\downarrow$)  \\
\midrule
PASS & 58.60 & 49.05 & 8.36 & 56.55 & 44.95 & 11.75  \\
\rowcolor{cyan!5}  \hspace{3pt} \textit{w/} IVT (Diagonal) & 59.29 & 49.75 & 8.14 & 58.04 & 46.78 & 11.29\\
\rowcolor{cyan!5} \hspace{3pt}  \textit{w/} IVT (Full)& 59.73 & 49.92& 7.89 & 58.70 & 47.18 & 11.00 \\
\bottomrule[1pt]
\end{tabular}}
\captionsetup{skip=3pt}
\label{tab:off}
\end{table}

\section{Conclusion and Future Work}
In this paper, we investigate the linear mode connectivity (LMC) of CIL oracles and show that models can retain performance on earlier tasks by following low-loss linear paths in parameter space. Motivated by this insight, we propose Increment Vector Transformation (IVT), a lightweight plug-in that geometrically adjusts model updates to preserve such connectivity. IVT transforms the increment vector using diagonal Fisher Information Matrices and is applied periodically during training. This strategy improves new task learning with minimal interference to prior knowledge, and integrates seamlessly into a wide range of CIL algorithms with minimal overhead.
Extensive experiments on CIFAR-100, FGVCAircraft, ImageNet-Subset, and ImageNet-Full confirm the broad effectiveness of IVT. It consistently enhances performance across both exemplar-free and exemplar-based methods, and demonstrates strong compatibility with both scratch-trained and pre-trained learning paradigms. These results highlight IVT’s versatility and practical utility in real-world continual learning scenarios.

While IVT achieves a favorable balance between performance and efficiency, it currently adopts a diagonal FIM approximation to maintain scalability. Although this choice is sufficient in most cases, future work may further explore adaptive or structured approximations to better capture complex parameter interactions where needed.


\bibliographystyle{IEEEtran}
\bibliography{main}

\begin{thebibliography}{10}
\providecommand{\url}[1]{#1}
\csname url@samestyle\endcsname
\providecommand{\newblock}{\relax}
\providecommand{\bibinfo}[2]{#2}
\providecommand{\BIBentrySTDinterwordspacing}{\spaceskip=0pt\relax}
\providecommand{\BIBentryALTinterwordstretchfactor}{4}
\providecommand{\BIBentryALTinterwordspacing}{\spaceskip=\fontdimen2\font plus
\BIBentryALTinterwordstretchfactor\fontdimen3\font minus \fontdimen4\font\relax}
\providecommand{\BIBforeignlanguage}[2]{{%
\expandafter\ifx\csname l@#1\endcsname\relax
\typeout{** WARNING: IEEEtran.bst: No hyphenation pattern has been}%
\typeout{** loaded for the language `#1'. Using the pattern for}%
\typeout{** the default language instead.}%
\else
\language=\csname l@#1\endcsname
\fi
#2}}
\providecommand{\BIBdecl}{\relax}
\BIBdecl

\bibitem{mccloskey1989catastrophic}
M.~McCloskey and N.~J. Cohen, ``Catastrophic interference in connectionist networks: The sequential learning problem,'' in \emph{Psychology of learning and motivation}.\hskip 1em plus 0.5em minus 0.4em\relax Elsevier, 1989, vol.~24, pp. 109--165.

\bibitem{Kirkpatrick2016OvercomingCF}
J.~Kirkpatrick, R.~Pascanu, N.~C. Rabinowitz, J.~Veness, G.~Desjardins, A.~A. Rusu, K.~Milan, J.~Quan, T.~Ramalho, A.~Grabska-Barwinska, D.~Hassabis, C.~Clopath, D.~Kumaran, and R.~Hadsell, ``Overcoming catastrophic forgetting in neural networks,'' \emph{Proceedings of the National Academy of Sciences}, vol. 114, pp. 3521 -- 3526, 2016.

\bibitem{zenke2017continual}
F.~Zenke, B.~Poole, and S.~Ganguli, ``Continual learning through synaptic intelligence,'' \emph{International conference on machine learning}, pp. 3987--3995, 2017.

\bibitem{Hou2019LearningAU}
S.~Hou, X.~Pan, C.~C. Loy, Z.~Wang, and D.~Lin, ``Learning a unified classifier incrementally via rebalancing,'' \emph{2019 IEEE/CVF Conference on Computer Vision and Pattern Recognition (CVPR)}, pp. 831--839, 2019.

\bibitem{Douillard2020PODNetPO}
A.~Douillard, M.~Cord, C.~Ollion, T.~Robert, and E.~Valle, ``Podnet: Pooled outputs distillation for small-tasks incremental learning,'' \emph{European Conference on Computer Vision}, 2020.

\bibitem{Rebuffi2016iCaRLIC}
S.-A. Rebuffi, A.~Kolesnikov, G.~Sperl, and C.~H. Lampert, ``icarl: Incremental classifier and representation learning,'' \emph{2017 IEEE Conference on Computer Vision and Pattern Recognition (CVPR)}, pp. 5533--5542, 2016.

\bibitem{liu2020mnemonics}
Y.~Liu, Y.~Su, A.-A. Liu, B.~Schiele, and Q.~Sun, ``Mnemonics training: Multi-class incremental learning without forgetting,'' in \emph{Proceedings of the IEEE/CVF conference on Computer Vision and Pattern Recognition}, 2020, pp. 12\,245--12\,254.

\bibitem{luo2023class}
Z.~Luo, Y.~Liu, B.~Schiele, and Q.~Sun, ``Class-incremental exemplar compression for class-incremental learning,'' in \emph{Proceedings of the IEEE/CVF Conference on Computer Vision and Pattern Recognition}, 2023, pp. 11\,371--11\,380.

\bibitem{liu2021adaptive}
Y.~Liu, B.~Schiele, and Q.~Sun, ``Adaptive aggregation networks for class-incremental learning,'' in \emph{Proceedings of the IEEE/CVF conference on Computer Vision and Pattern Recognition}, 2021, pp. 2544--2553.

\bibitem{zhou2022model}
D.-W. Zhou, Q.-W. Wang, H.-J. Ye, and D.-C. Zhan, ``A model or 603 exemplars: Towards memory-efficient class-incremental learning,'' \emph{The Eleventh International Conference on Learning Representations}, 2022.

\bibitem{draxler2018essentially}
F.~Draxler, K.~Veschgini, M.~Salmhofer, and F.~Hamprecht, ``Essentially no barriers in neural network energy landscape,'' in \emph{International conference on machine learning}.\hskip 1em plus 0.5em minus 0.4em\relax PMLR, 2018, pp. 1309--1318.

\bibitem{garipov2018loss}
T.~Garipov, P.~Izmailov, D.~Podoprikhin, D.~P. Vetrov, and A.~G. Wilson, ``Loss surfaces, mode connectivity, and fast ensembling of dnns,'' \emph{Advances in neural information processing systems}, vol.~31, 2018.

\bibitem{frankle2020linear}
J.~Frankle, G.~K. Dziugaite, D.~Roy, and M.~Carbin, ``Linear mode connectivity and the lottery ticket hypothesis,'' in \emph{International Conference on Machine Learning}.\hskip 1em plus 0.5em minus 0.4em\relax PMLR, 2020, pp. 3259--3269.

\bibitem{wen2023optimizing}
H.~Wen, H.~Cheng, H.~Qiu, L.~Wang, L.~Pan, and H.~Li, ``Optimizing mode connectivity for class incremental learning,'' in \emph{International Conference on Machine Learning}.\hskip 1em plus 0.5em minus 0.4em\relax PMLR, 2023, pp. 36\,940--36\,957.

\bibitem{Mirzadeh2020LinearMC}
S.~I. Mirzadeh, M.~Farajtabar, D.~Gorur, R.~Pascanu, and H.~Ghasemzadeh, ``Linear mode connectivity in multitask and continual learning,'' in \emph{9th International Conference on Learning Representations}, 2021.

\bibitem{marczak2024magmax}
D.~Marczak, B.~Twardowski, T.~Trzciński, and S.~Cygert, ``Magmax: Leveraging model merging for seamless continual learning,'' 2024.

\bibitem{marouf2024weightedensemblemodelsstrong}
I.~E. Marouf, S.~Roy, E.~Tartaglione, and S.~Lathuilière, ``Weighted ensemble models are strong continual learners,'' 2024.

\bibitem{simon2021learning}
C.~Simon, P.~Koniusz, and M.~Harandi, ``On learning the geodesic path for incremental learning,'' in \emph{Proceedings of the IEEE/CVF conference on Computer Vision and Pattern Recognition}, 2021, pp. 1591--1600.

\bibitem{wu2019large}
Y.~Wu, Y.~Chen, L.~Wang, Y.~Ye, Z.~Liu, Y.~Guo, and Y.~Fu, ``Large scale incremental learning,'' in \emph{Proceedings of the IEEE/CVF conference on computer vision and pattern recognition}, 2019, pp. 374--382.

\bibitem{wang2022foster}
F.-Y. Wang, D.-W. Zhou, H.-J. Ye, and D.-C. Zhan, ``Foster: Feature boosting and compression for class-incremental learning,'' in \emph{European conference on computer vision}.\hskip 1em plus 0.5em minus 0.4em\relax Springer, 2022, pp. 398--414.

\bibitem{he2024gradient}
J.~He, ``Gradient reweighting: Towards imbalanced class-incremental learning,'' in \emph{Proceedings of the IEEE/CVF Conference on Computer Vision and Pattern Recognition}, 2024, pp. 16\,668--16\,677.

\bibitem{liu2021rmm}
Y.~Liu, B.~Schiele, and Q.~Sun, ``Rmm: Reinforced memory management for class-incremental learning,'' \emph{Advances in Neural Information Processing Systems}, vol.~34, pp. 3478--3490, 2021.

\bibitem{liu2023online}
Y.~Liu, Y.~Li, B.~Schiele, and Q.~Sun, ``Online hyperparameter optimization for class-incremental learning,'' in \emph{Proceedings of the AAAI Conference on Artificial Intelligence}, vol.~37, no.~7, 2023, pp. 8906--8913.

\bibitem{wang2022learning}
Z.~Wang, Z.~Zhang, C.-Y. Lee, H.~Zhang, R.~Sun, X.~Ren, G.~Su, V.~Perot, J.~Dy, and T.~Pfister, ``Learning to prompt for continual learning,'' in \emph{Proceedings of the IEEE/CVF conference on computer vision and pattern recognition}, 2022, pp. 139--149.

\bibitem{wang2022dualprompt}
Z.~Wang, Z.~Zhang, S.~Ebrahimi, R.~Sun, H.~Zhang, C.-Y. Lee, X.~Ren, G.~Su, V.~Perot, J.~Dy \emph{et~al.}, ``Dualprompt: Complementary prompting for rehearsal-free continual learning,'' in \emph{European Conference on Computer Vision}.\hskip 1em plus 0.5em minus 0.4em\relax Springer, 2022, pp. 631--648.

\bibitem{wang2022s}
Y.~Wang, Z.~Huang, and X.~Hong, ``S-prompts learning with pre-trained transformers: An occam’s razor for domain incremental learning,'' \emph{Advances in Neural Information Processing Systems}, vol.~35, pp. 5682--5695, 2022.

\bibitem{smith2023coda}
J.~S. Smith, L.~Karlinsky, V.~Gutta, P.~Cascante-Bonilla, D.~Kim, A.~Arbelle, R.~Panda, R.~Feris, and Z.~Kira, ``Coda-prompt: Continual decomposed attention-based prompting for rehearsal-free continual learning,'' in \emph{Proceedings of the IEEE/CVF Conference on Computer Vision and Pattern Recognition}, 2023, pp. 11\,909--11\,919.

\bibitem{zhu2021prototype}
F.~Zhu, X.-Y. Zhang, C.~Wang, F.~Yin, and C.-L. Liu, ``Prototype augmentation and self-supervision for incremental learning,'' in \emph{Proceedings of the IEEE/CVF Conference on Computer Vision and Pattern Recognition}, 2021, pp. 5871--5880.

\bibitem{zhu2022self}
K.~Zhu, W.~Zhai, Y.~Cao, J.~Luo, and Z.-J. Zha, ``Self-sustaining representation expansion for non-exemplar class-incremental learning,'' in \emph{Proceedings of the IEEE/CVF Conference on Computer Vision and Pattern Recognition}, 2022, pp. 9296--9305.

\bibitem{magistri2024elastic}
S.~Magistri, T.~Trinci, A.~Soutif, J.~van~de Weijer, and A.~D. Bagdanov, ``Elastic feature consolidation for cold start exemplar-free incremental learning,'' in \emph{The Twelfth International Conference on Learning Representations}, 2024.

\bibitem{li2024fcs}
Q.~Li, Y.~Peng, and J.~Zhou, ``Fcs: Feature calibration and separation for non-exemplar class incremental learning,'' in \emph{Proceedings of the IEEE/CVF Conference on Computer Vision and Pattern Recognition}, 2024, pp. 28\,495--28\,504.

\bibitem{Smith2021AlwaysBD}
J.~Smith, Y.-C. Hsu, J.~C. Balloch, Y.~Shen, H.~Jin, and Z.~Kira, ``Always be dreaming: A new approach for data-free class-incremental learning,'' \emph{2021 IEEE/CVF International Conference on Computer Vision (ICCV)}, pp. 9354--9364, 2021.

\bibitem{choi2021dual}
Y.~Choi, M.~El-Khamy, and J.~Lee, ``Dual-teacher class-incremental learning with data-free generative replay,'' in \emph{Proceedings of the IEEE/CVF Conference on Computer Vision and Pattern Recognition}, 2021, pp. 3543--3552.

\bibitem{gao2022r}
Q.~Gao, C.~Zhao, B.~Ghanem, and J.~Zhang, ``R-dfcil: Relation-guided representation learning for data-free class incremental learning,'' \emph{European Conference on Computer Vision}, pp. 423--439, 2022.

\bibitem{qiu2024dual}
Z.~Qiu, Y.~Xu, F.~Meng, H.~Li, L.~Xu, and Q.~Wu, ``Dual-consistency model inversion for non-exemplar class incremental learning,'' in \emph{Proceedings of the IEEE/CVF conference on computer vision and pattern recognition}, 2024, pp. 24\,025--24\,035.

\bibitem{zhuang2022acil}
H.~Zhuang, Z.~Weng, H.~Wei, R.~Xie, K.-A. Toh, and Z.~Lin, ``Acil: Analytic class-incremental learning with absolute memorization and privacy protection,'' \emph{Advances in Neural Information Processing Systems}, vol.~35, pp. 11\,602--11\,614, 2022.

\bibitem{zhuang2023gkeal}
H.~Zhuang, Z.~Weng, R.~He, Z.~Lin, and Z.~Zeng, ``Gkeal: Gaussian kernel embedded analytic learning for few-shot class incremental task,'' in \emph{Proceedings of the IEEE/CVF conference on computer vision and pattern recognition}, 2023, pp. 7746--7755.

\bibitem{zhuang2024ds}
H.~Zhuang, R.~He, K.~Tong, Z.~Zeng, C.~Chen, and Z.~Lin, ``Ds-al: A dual-stream analytic learning for exemplar-free class-incremental learning,'' in \emph{Proceedings of the AAAI Conference on Artificial Intelligence}, vol.~38, no.~15, 2024, pp. 17\,237--17\,244.

\bibitem{yan2021dynamically}
S.~Yan, J.~Xie, and X.~He, ``Der: Dynamically expandable representation for class incremental learning,'' \emph{Proceedings of the IEEE/CVF Conference on Computer Vision and Pattern Recognition}, pp. 3014--3023, 2021.

\bibitem{rypesc2024divide}
G.~Rype{\'s}{\'c}, S.~Cygert, V.~Khan, T.~Trzcinski, B.~M. Zieli{\'n}ski, and B.~Twardowski, ``Divide and not forget: Ensemble of selectively trained experts in continual learning,'' in \emph{The Twelfth International Conference on Learning Representations}, 2024.

\bibitem{zhou2024expandable}
D.-W. Zhou, H.-L. Sun, H.-J. Ye, and D.-C. Zhan, ``Expandable subspace ensemble for pre-trained model-based class-incremental learning,'' in \emph{Proceedings of the IEEE/CVF Conference on Computer Vision and Pattern Recognition}, 2024, pp. 23\,554--23\,564.

\bibitem{lee2017overcoming}
S.-W. Lee, J.-H. Kim, J.~Jun, J.-W. Ha, and B.-T. Zhang, ``Overcoming catastrophic forgetting by incremental moment matching,'' \emph{Advances in neural information processing systems}, vol.~30, 2017.

\bibitem{sun2024incremental}
W.~Sun, Q.~Li, S.~Zhang, W.~Wang, and Y.~Geng, ``Incremental learning via robust parameter posterior fusion,'' in \emph{Proceedings of the 32nd ACM International Conference on Multimedia}, 2024, pp. 4292--4301.

\bibitem{benton2021loss}
G.~Benton, W.~Maddox, S.~Lotfi, and A.~G.~G. Wilson, ``Loss surface simplexes for mode connecting volumes and fast ensembling,'' in \emph{International Conference on Machine Learning}.\hskip 1em plus 0.5em minus 0.4em\relax PMLR, 2021, pp. 769--779.

\bibitem{neyshabur2020being}
B.~Neyshabur, H.~Sedghi, and C.~Zhang, ``What is being transferred in transfer learning?'' \emph{Advances in neural information processing systems}, vol.~33, pp. 512--523, 2020.

\bibitem{jin2022dataless}
X.~Jin, X.~Ren, D.~Preotiuc-Pietro, and P.~Cheng, ``Dataless knowledge fusion by merging weights of language models,'' \emph{arXiv preprint arXiv:2212.09849}, 2022.

\bibitem{liu2023tangent}
T.~Y. Liu and S.~Soatto, ``Tangent model composition for ensembling and continual fine-tuning,'' in \emph{Proceedings of the IEEE/CVF International Conference on Computer Vision}, 2023, pp. 18\,676--18\,686.

\bibitem{tang2025merging}
A.~Tang, E.~Yang, L.~Shen, Y.~Luo, H.~Hu, B.~Du, and D.~Tao, ``Merging models on the fly without retraining: A sequential approach to scalable continual model merging,'' \emph{arXiv preprint arXiv:2501.09522}, 2025.

\bibitem{qiu2025mingle}
Z.~Qiu, Y.~Xu, C.~He, F.~Meng, L.~Xu, Q.~Wu, and H.~Li, ``Mingle: Mixtures of null-space gated low-rank experts for test-time continual model merging,'' \emph{arXiv preprint arXiv:2505.11883}, 2025.

\bibitem{radford2021learning}
A.~Radford, J.~W. Kim, C.~Hallacy, A.~Ramesh, G.~Goh, S.~Agarwal, G.~Sastry, A.~Askell, P.~Mishkin, J.~Clark \emph{et~al.}, ``Learning transferable visual models from natural language supervision,'' in \emph{International conference on machine learning}.\hskip 1em plus 0.5em minus 0.4em\relax PmLR, 2021, pp. 8748--8763.

\bibitem{yin2020optimization}
D.~Yin, M.~Farajtabar, A.~Li, N.~Levine, and A.~Mott, ``Optimization and generalization of regularization-based continual learning: a loss approximation viewpoint,'' \emph{arXiv preprint arXiv:2006.10974}, 2020.

\bibitem{mirzadeh2020understanding}
S.~I. Mirzadeh, M.~Farajtabar, R.~Pascanu, and H.~Ghasemzadeh, ``Understanding the role of training regimes in continual learning,'' \emph{Advances in Neural Information Processing Systems}, vol.~33, pp. 7308--7320, 2020.

\bibitem{wu2024meta}
Y.~Wu, L.-K. Huang, R.~Wang, D.~Meng, and Y.~Wei, ``Meta continual learning revisited: Implicitly enhancing online hessian approximation via variance reduction,'' in \emph{The Twelfth International Conference on Learning Representations}, 2024.

\bibitem{verwimp2021rehearsal}
E.~Verwimp, M.~De~Lange, and T.~Tuytelaars, ``Rehearsal revealed: The limits and merits of revisiting samples in continual learning,'' in \emph{Proceedings of the IEEE/CVF International Conference on Computer Vision}, 2021, pp. 9385--9394.

\bibitem{ilharco2023editing}
\BIBentryALTinterwordspacing
G.~Ilharco, M.~T. Ribeiro, M.~Wortsman, L.~Schmidt, H.~Hajishirzi, and A.~Farhadi, ``Editing models with task arithmetic,'' in \emph{The Eleventh International Conference on Learning Representations}, 2023. [Online]. Available: \url{https://openreview.net/forum?id=6t0Kwf8-jrj}
\BIBentrySTDinterwordspacing

\bibitem{huszar2018note}
F.~Husz{\'a}r, ``Note on the quadratic penalties in elastic weight consolidation,'' \emph{Proceedings of the National Academy of Sciences}, vol. 115, no.~11, pp. E2496--E2497, 2018.

\bibitem{fisher1922mathematical}
R.~A. Fisher, ``On the mathematical foundations of theoretical statistics,'' \emph{Philosophical transactions of the Royal Society of London. Series A, containing papers of a mathematical or physical character}, vol. 222, no. 594-604, pp. 309--368, 1922.

\bibitem{amari1996neural}
S.-i. Amari, ``Neural learning in structured parameter spaces-natural riemannian gradient,'' \emph{Advances in neural information processing systems}, vol.~9, 1996.

\bibitem{martens2020new}
J.~Martens, ``New insights and perspectives on the natural gradient method,'' \emph{Journal of Machine Learning Research}, vol.~21, no. 146, pp. 1--76, 2020.

\bibitem{matena2022merging}
M.~S. Matena and C.~A. Raffel, ``Merging models with fisher-weighted averaging,'' \emph{Advances in Neural Information Processing Systems}, vol.~35, pp. 17\,703--17\,716, 2022.

\bibitem{daheim2023model}
N.~Daheim, T.~M{\"o}llenhoff, E.~M. Ponti, I.~Gurevych, and M.~E. Khan, ``Model merging by uncertainty-based gradient matching,'' \emph{arXiv preprint arXiv:2310.12808}, 2023.

\bibitem{krizhevsky2009Learning}
A.~Krizhevsky, G.~Hinton \emph{et~al.}, ``Learning multiple layers of features from tiny images,'' 2009.

\bibitem{maji2013fine}
S.~Maji, E.~Rahtu, J.~Kannala, M.~Blaschko, and A.~Vedaldi, ``Fine-grained visual classification of aircraft,'' \emph{arXiv preprint arXiv:1306.5151}, 2013.

\bibitem{Deng2009ImageNetAL}
J.~Deng, W.~Dong, R.~Socher, L.-J. Li, K.~Li, and L.~Fei-Fei, ``Imagenet: A large-scale hierarchical image database,'' \emph{2009 IEEE Conference on Computer Vision and Pattern Recognition}, pp. 248--255, 2009.

\bibitem{chaudhry2018riemannian}
A.~Chaudhry, P.~K. Dokania, T.~Ajanthan, and P.~H. Torr, ``Riemannian walk for incremental learning: Understanding forgetting and intransigence,'' \emph{Proceedings of the European conference on computer vision (ECCV)}, pp. 532--547, 2018.

\bibitem{zheng2024multi}
B.~Zheng, D.-W. Zhou, H.-J. Ye, and D.-C. Zhan, ``Multi-layer rehearsal feature augmentation for class-incremental learning,'' in \emph{Forty-first International Conference on Machine Learning}, 2024.

\bibitem{goyal2023finetune}
S.~Goyal, A.~Kumar, S.~Garg, Z.~Kolter, and A.~Raghunathan, ``Finetune like you pretrain: Improved finetuning of zero-shot vision models,'' in \emph{Proceedings of the IEEE/CVF Conference on Computer Vision and Pattern Recognition}, 2023, pp. 19\,338--19\,347.

\bibitem{zhang2023slca}
G.~Zhang, L.~Wang, G.~Kang, L.~Chen, and Y.~Wei, ``Slca: Slow learner with classifier alignment for continual learning on a pre-trained model,'' in \emph{Proceedings of the IEEE/CVF International Conference on Computer Vision}, 2023, pp. 19\,148--19\,158.

\bibitem{zhang2024overcoming}
W.~Zhang, P.~Janson, R.~Aljundi, and M.~Elhoseiny, ``Overcoming generic knowledge loss with selective parameter update,'' in \emph{Proceedings of the IEEE/CVF Conference on Computer Vision and Pattern Recognition}, 2024, pp. 24\,046--24\,056.

\bibitem{huang2024class}
L.~Huang, X.~Cao, H.~Lu, and X.~Liu, ``Class-incremental learning with clip: Adaptive representation adjustment and parameter fusion,'' in \emph{European Conference on Computer Vision}.\hskip 1em plus 0.5em minus 0.4em\relax Springer, 2024, pp. 214--231.

\bibitem{he2016deep}
K.~He, X.~Zhang, S.~Ren, and J.~Sun, ``Deep residual learning for image recognition,'' \emph{Proceedings of the IEEE conference on computer vision and pattern recognition}, pp. 770--778, 2016.

\end{thebibliography}

\end{document}